\documentclass{article}
\PassOptionsToPackage{numbers, compress}{natbib}

\usepackage[final]{neurips_2019}

\usepackage[utf8]{inputenc}
\usepackage[T1]{fontenc}
\usepackage{hyperref}
\usepackage{url}
\usepackage{booktabs}
\usepackage{amsfonts}
\usepackage{nicefrac}
\usepackage{microtype}

\usepackage{graphicx}
\usepackage{subfig}

\usepackage{url}
\usepackage[dvipsnames]{xcolor} 
\usepackage{amsmath}
\usepackage{algorithmic}
\usepackage{algorithm}
\usepackage{amsthm}
\usepackage{cleveref}
\usepackage{array}
\usepackage{verbatim}
\usepackage{natbib}
\usepackage{wrapfig}

\theoremstyle{plain}
\newtheorem{prop}{\protect\propositionname}
\theoremstyle{plain}
\newtheorem{thm}{\protect\theoremname}
\theoremstyle{plain}
\newtheorem{assumption}{\protect\assumptionname}
\theoremstyle{plain}
\newtheorem{lem}[thm]{\protect\lemmaname}
\theoremstyle{plain}
\newtheorem{cor}[thm]{\protect\corollaryname}
\providecommand{\assumptionname}{Assumption}
\providecommand{\lemmaname}{Lemma}
\providecommand{\propositionname}{Proposition}
\providecommand{\corollaryname}{Corollary}
\providecommand{\theoremname}{Theorem}

\newtheorem{example}{Example}

\clearpage{}%

\newcommand{\argmax}{\mathop{\rm argmax}\limits}
\newcommand{\argmin}{\mathop{\rm argmin}\limits}

\newcommand{\boldtheta}{{\boldsymbol{\theta}}}
\newcommand{\bolddelta}{{\boldsymbol{\delta}}}

\newcommand{\boldeta}{{\boldsymbol{\eta}}}

\newcommand{\boldH}{{\boldsymbol{H}}}
\newcommand{\boldA}{{\boldsymbol{A}}}
\newcommand{\boldB}{{\boldsymbol{B}}}
\newcommand{\boldC}{{\boldsymbol{C}}}

\newcommand{\boldmu}{{\boldsymbol{\mu}}}
\newcommand{\boldJ}{{\boldsymbol{J}}}

\newcommand{\boldf}{{\boldsymbol{f}}}
\newcommand{\boldepsilon}{{\boldsymbol{\epsilon}}}

\newcommand{\bolds}{{\boldsymbol{s}}}
\newcommand{\boldone}{{\boldsymbol{1}}}

\newcommand{\boldx}{{\boldsymbol{x}}}

\newcommand{\boldV}{{\boldsymbol{V}}}

\newcommand{\boldphi}{{\boldsymbol{\phi}}}
\newcommand{\boldzero}{{\boldsymbol{0}}}

\newcommand{\boldt}{{\boldsymbol{t}}}

\newcommand{\boldpsi}{{\boldsymbol{\psi}}}

\newcommand{\boldI}{{\boldsymbol{I}}}

\newcommand{\mathbbE}{\mathbb{E}}

\newcommand{\iid}{\stackrel{\mathrm{i.i.d.}}{\sim}}
\newcommand{\dx}{\mathrm{d}\boldx}

\newcommand{\vertiii}[1]{{\left\vert\kern-0.25ex\left\vert\kern-0.25ex\left\vert #1 
    \right\vert\kern-0.25ex\right\vert\kern-0.25ex\right\vert}}\clearpage{}%

\graphicspath{{figs/}}
\newcommand{\ourtitle}{Fisher Efficient Inference of Intractable Models}

\begin{document}

\title{\ourtitle}
\author{
  Song Liu\\
  University of Bristol\\
  The Alan Turing Institute, UK\\
  \texttt{song.liu@bristol.ac.uk}\\
  \And
  Takafumi Kanamori \\
  Tokyo Institute of Technology,\\
  RIKEN, Japan\\
  \texttt{kanamori@c.titech.ac.jp} \\
  \And
  Wittawat Jitkrittum \\
  Max Planck Institute \\
  for Intelligent Systems, Germany\\
  \texttt{wittawat@tuebingen.mpg.de} \\  
  \And
  Yu Chen \\
  University of Bristol, UK\\
  \texttt{yc14600@bristol.ac.uk}\\  
}

\date{}
\maketitle
\begin{abstract}
Maximum Likelihood Estimators (MLE) has many good properties. For example, the asymptotic variance of MLE solution attains equality of the asymptotic Cram{\'e}r-Rao lower bound (efficiency bound), which is the minimum possible variance for an unbiased estimator. 
However, obtaining such MLE solution requires calculating the likelihood function which may not be tractable due to the normalization term of the density model. In this paper, we derive a Discriminative Likelihood Estimator (DLE) from the Kullback-Leibler divergence minimization criterion implemented via density ratio estimation and a Stein operator. We study the problem of model inference using DLE. We prove its consistency and show that the asymptotic variance of its solution can attain the equality of the efficiency bound under mild regularity conditions. 
We also propose a dual formulation of DLE which can be easily optimized. 
Numerical studies validate our asymptotic theorems and we give an example where DLE successfully estimates an intractable model constructed using a pre-trained deep neural network. 
\end{abstract}

\section{Introduction}

Maximum Likelihood Estimation (MLE) has been a classic choice of density parameter estimator.
It can be derived from the Kullback-Leibler (KL)  divergence minimization criterion and the resulting algorithm simply maximizes the likelihood function (log-density function) over a set of observations. The solution of MLE has many attractive asymptotic properties:
the asymptotic variance of MLE solutions reach an asymptotic lower bound of all unbiased estimators \citep{Cramer1946,Rao1945}. 

However, learning via MLE requires evaluating the normalization term of the density
function; it may be challenging to apply MLE to learn a complex model that has a 
computationally intractable normalization term.
A partial solution to this problem is approximating the normalization term or the
gradient of the likelihood function numerically. Many methods along this
line of research have been actively studied: importance-sampling MLE
\citep{Robert2005}, contrastive divergence \citep{Hinton2002} and more recently
amortized MLE \citep{Wang2016}. While the computation of the normalization term is
mitigated, these sampling-based approximate methods come at the expense of
extra computational burden and estimation errors. 

The issue of intractable normalization terms has led to the develoment of other 
approaches other than the KL divergence minimization. For example, Score
Matching (SM) \citep{Hyvaerinen2005} minimizes the Fisher divergence
\citep{Sanchez2012} between the data distribution and a model distribution
which is specified by the gradient (with respect to the input variable) of its
log density function. Its computation does not require the evaluation of the
normalization term, thus SM does not suffer from the intractability issue. 
Extensions of SM has been used for infinite dimensional exponential family models \citep{sriperumbudur2017}, non-negative models \citep{hyvarinen2007,Yu2018} and high dimensional graphical models fitting \citep{lin2016}. 

Other than the Fisher divergence, a kernel-based divergence measure known as Kernel Stein Discrepancy (KSD)
\citep{Chwialkowski2016,LiuQ2016} has been proposed as a test statistic for goodness-of-fit testing to
measure the difference between a data and a model distribution, without the
hassle of evaluating the normalization term. It reformulates the kernel Maximum
Mean Discrepancy (MMD) \citep{gretton2012kernel} with a Stein operator \citep{Stein1972,Gorham2015,oates2017control} which is also
defined using the gradient of the log density function. For the same reason as in SM, the KSD can 
be estimated when applied to a density model with an intractable normalizer. The last few years have
seen many applications of KSD such as variational inference
\citep{LiuQ2016SVGD}, sampling \citep{oates2017control,CheMacGorBriOat2018}, and
score function estimation \citep{LiTur2018, ShiSunZhu2018} among others.
KSD minimization is a natural candidate criterion for fitting intractable
models \citep{Barp2019}. However, the
divergence measure defined by the KSD is directly characterized by the kernel
used. Unlike in the case of goodness-of-fit testing where the kernel may be
chosen by maximizing the test power \citep{Jitkrittum2017}, to date, there is
no clear objective for choosing the right kernel in the case of model fitting.

By contrast, KL divergence has been a classic discrepancy measure for model fitting. 
The question that we address is: can we construct a generic
model inference method by minimizing the KL divergence without the knowledge of
the normalization term?  In this paper, we present a
novel \emph{unnormalized} model inference method, Discriminative Likelihood Estimation
(DLE), by following the KL divergence minimization criterion. The algorithm uses
a technique called Density Ratio Estimation \citep{Sugiyama2012} which is
conventionally used to estimate the ratio between two density functions from
two sets of samples. We adapt this method to estimate the ratio between a data
and an unnormalized density model with the help of a Stein operator. We then use the
estimated ratio to construct a surrogate to KL divergence which is later
minimized to fit the parameters of an \emph{unnormalized} density function. The
resulting algorithm is a $\min \max$ problem, which we show can be conveniently converted into a min-min problem using Lagrangian duality. No extra
sampling steps are required. 

We further prove the consistency and asymptotic properties of DLE under mild conditions. One of our major contributions is that we prove the proposed estimator can also attain the asymptotic Cram{\'e}r-Rao bound. Numerical experiments validate our theories and we show DLE indeed performs well under realistic settings. 

\vspace*{-4mm}
\section{Background}
\subsection{Problem: Intractable Model Inference via KL Divergence Minimization}
\label{sec.back}
Consider the problem of estimating the parameter $\boldtheta$ of a
probability density model $p(\boldx; \boldtheta)$ from a set of i.i.d. samples: $X_q :=
\{\boldx^{(i)}_q\}_{i=1}^{n_q}\iid Q$ where $Q$ is a probability distribution whose density function is $q(\boldx)$. 
One idea is minimizing the sample approximated KL divergence from $p_\boldtheta$ to $q$:
\begin{align*}
\vspace*{-4mm}
	\min_\boldtheta \mathrm{KL}\left[q|p_\boldtheta\right] = 	\min_\boldtheta \mathbb{E}_q\left[\log \frac{q(\boldx)}{p(\boldx; \boldtheta)}\right] &=  C - \max_\boldtheta \mathbb{E}_q\left[\log p(\boldx; \boldtheta)\right] \notag \\
	&\approx C - \max_\boldtheta \frac{1}{n_q} \sum_{i=1}^{n_q} \log p(\boldx_q^{(i)};\boldtheta),
\end{align*}
where $C$ is a constant that does not depend on $\boldtheta$. The last line uses $X_q$ to approximate the expectation over $q(\boldx)$. This technique is known as Maximum Likelihood Estimation (MLE). 

Despite many advantages, MLE is unfit for intractable model inference. Consider
for instance a density model $p(\boldx; \boldtheta) :=
\frac{\bar{p}(\boldx;\boldtheta)}{z(\boldtheta)}$, where
$\bar{p}(\boldx;\boldtheta)$ is a positive multilayer neural network
parametrized by $\boldtheta$, $Z(\boldtheta) = \int
\bar{p}(\boldx;\boldtheta)\dx $ is the normalization term which guarantees that
$p(\boldx; \boldtheta)$ integrates to 1 over its domain. In this example,
$Z(\boldtheta)$ does not have a computationally tractable form; therefore, MLE
cannot be used without approximating the likelihood function or its gradient
using numerical methods such as Markov chain Monte Carlo (MCMC). 

However, there is an \emph{alternative approach} to minimizing the KL
divergence: $\mathrm{KL}\left[q|p_\boldtheta\right]$ is an expectation of the
log-ratio $\log\frac{q(\boldx)}{p(\boldx;\boldtheta)}$ with respect to the data
distribution $q(\boldx)$. If we have access to
$\frac{q(\boldx)}{p(\boldx;\boldtheta)}$, we can
approximate this KL by taking the average of the density ratio function over
samples $X_q$, and the density model parameter $\boldtheta$ can be subsequently
estimated by minimizing this approximation to the KL divergence.

\subsection{Two Sample Density Ratio Estimation}
Traditionally, Density Ratio Estimation (DRE) \citep{Sugiyama2008a,Sugiyama2012} refers to estimating the ratio of two unknown densities from their samples.
Given \emph{two} sets of i.i.d. samples drawn separately from distributions $Q$ and $P$:
$X_q := \{\boldx^{(i)}_q\}_{i=1}^{n_q}\sim Q, 
X_p := \{\boldx^{(i)}_p\}_{i=1}^{n_p}\sim P, \boldx_q, \boldx_p \in \mathbb{R}^d,
$ where distribution $Q$ and $P$ have density functions $q(\boldx)$ and $p(\boldx)$ respectively.
We hope to estimate the ratio $\frac{q(\boldx)}{p(\boldx)}$. 

We can model the density ratio using a function $r(\boldx; \bolddelta)$ parameterized by $\bolddelta$. To obtain the parameter $\bolddelta$, we minimize the KL divergence $\mathrm{KL}[q|q_\bolddelta]$
where $q(\boldx;\bolddelta):=r(\boldx;\bolddelta) p(\boldx)$:
\begin{align}
\label{eq.kliep}
\min_\bolddelta \mathrm{KL}[q|q_\bolddelta]
~~\text{s.t.}
	\textstyle\int r(\boldx;\bolddelta) p(\boldx) \dx = 1.
\end{align}
$\mathrm{KL}[q|q_\bolddelta]$ comprises three terms in which only one term is dependent on the parameter $\bolddelta$:
\begin{align}
\label{eq.sample.KL}
	\mathrm{KL}[q|q_\bolddelta] = \mathbbE_q[\log q(\boldx)] - \mathbbE_q[\log r(\boldx;\bolddelta)] - \mathbbE_q[\log p(\boldx)] 
	\approx -\textstyle\frac{1}{n_q} \sum_{i=1}^{n_q} \log r(\boldx^{(i)}_q;\bolddelta) + C,
\end{align}
The last step uses $X_q$ to approximate the expectation over $q(\boldx)$. $C$ is a constant irrelevant to $\bolddelta$. We can also approximate the equality constraint in \eqref{eq.kliep} using $X_p$:
\begin{align}
\label{eq.sample.normal}
\textstyle \int r(\boldx;\bolddelta) p(\boldx) \dx \approx \textstyle\frac{1}{n_p} \sum_{j=1}^{n_p} r(\boldx^{(j)}_p;\bolddelta).
\end{align}
Combining \eqref{eq.sample.KL} and \eqref{eq.sample.normal}, we get a sample version of \eqref{eq.kliep}:
\begin{align}
\label{eq.kliep.emp}
\hat{\bolddelta} :=\argmin_\bolddelta& -\textstyle\frac{1}{n_q} \sum_{i=1}^{n_q} \log r(\boldx^{(i)}_q;\bolddelta) + C 
~~~\text{ s.t.}
\textstyle\frac{1}{n_p} \sum_{j=1}^{n_p} r(\boldx^{(j)}_p;\bolddelta) = 1.
\end{align}
The above optimization is called Kullback Leibler Importance Estimation Procedure (KLIEP) \citep{Sugiyama2008a}.
Unfortunately, it \emph{cannot} be directly used to estimate our ratio
$\frac{q(\boldx)}{p(\boldx;\boldtheta)}$ since we only have samples from $q(\boldx)$ but not from $p(\boldx;\boldtheta)$. Consequently the equality constraint $\int r(\boldx;\bolddelta)
p(\boldx;\boldtheta) \dx = 1$ can no longer be approximated using samples. 

A natural remedy to this problem is to draw samples from
$p(\boldx;\boldtheta)$ using sampling techniques, such as MCMC which, in general, can be
costly when $p(\boldx;\boldtheta)$ is complex. 
Correlation among drawn samples from an MCMC scheme further complicates
estimation of the ratio. More importantly, regardless of the feasibility of
sampling from $p(\boldx;\boldtheta)$, the availability of an explicit (possibly unnormalized)
density $p(\boldx;\boldtheta)$ is much more valuable than just samples, especially in a high
dimensional space where samples rarely capture the fine-grained structural
information present in the density model $p(\boldx;\boldtheta)$.

In this work, we propose a new procedure -- Stein Density Ratio Estimation --
which can directly use the (unnormalized) density $p$, as it is, without
sampling from it. Moreover, the new procedure (described in Section
\ref{sec.stein.fea}) yields a  density ratio model
$r_{\boldtheta}(\boldx;\bolddelta)$ for the ratio function
$\frac{q(\boldx)}{p(\boldx;\boldtheta)}$ that automatically satisfies the
aforementioned equality constraint for all $\boldtheta$.

\section{Stein Density Ratio Estimation}
\label{sec.SDRE}
Let us consider a linear-in-parameter density ratio model $r(\boldx;\bolddelta)
:= \bolddelta^\top \boldf(\boldx)$, where $\boldf(\boldx)$ is a ``feature
function'' that transforms a data point $\boldx$ into a more powerful
representation. To better model
$\frac{q(\boldx)}{p(\boldx;\boldtheta)}$, we define a family of feature
functions called Stein features. 

\subsection{Stein Features}
\label{sec.stein.fea}
Suppose we have a feature function $\boldf(\boldx): \mathbb{R}^d \to \mathbb{R}^b$ and a density model $p(\boldx;\boldtheta): \mathbb{R}^{d}\to\mathbb{R}$. A Stein feature $T_{\boldtheta} \boldf(\boldx)\in \mathbb{R}^{b}$ with respect to $p(\boldx;\boldtheta)$ is 
$T_{\boldtheta} \boldf(\boldx) := [T_{\boldtheta} f_1(\boldx), \dots, T_{\boldtheta} f_i(\boldx), \dots, T_{\boldtheta} f_b(\boldx)]^\top,$ where $T_{\boldtheta}$ is a \emph{Stein operator} \citep{Stein1972,Gorham2015,Chwialkowski2016,oates2017control} and 
$T_{\boldtheta}f_i(\boldx) \in \mathbb{R}$ is defined as \[T_{\boldtheta} f_i(\boldx) := \langle \nabla_{\boldx} \log  p(\boldx;\boldtheta), \nabla_\boldx f_i(\boldx) \rangle +  \mathrm{trace}(\nabla^2_\boldx f_i(\boldx)),\]
where $f_i$ is the $i$-th output of function $\boldf$, $\nabla_\boldx f_i(\boldx)$ is the gradient of $f_i(\boldx)$ and $\nabla^2_\boldx f_i(\boldx)$ is the Hessian of $f_i(\boldx)$. 
Note that computing $T_{\boldtheta} \boldf(\boldx)$ does \emph{not} require evaluating the normalization term $Z(\boldtheta)$ as 
\[\nabla_{\boldx} \log p(\boldx;\boldtheta) = \nabla_{\boldx} \log \bar{p}(\boldx;\boldtheta) - \nabla_{\boldx}\log Z(\boldtheta),
\text{ where }\nabla_{\boldx} \log Z(\boldtheta)= 0.\]
\begin{example}
	\label{ex.expo}
	Let $p(\boldx;\boldtheta)$ be in exponential family with sufficient statistic $\boldpsi(\boldx)$, then \[T_{\boldtheta} f_i(\boldx)  =  \boldtheta^\top \boldJ_\boldx \boldpsi(\boldx) \nabla_\boldx f_i(\boldx) + \mathrm{trace}[\nabla^2_\boldx f_i(\boldx)],\]
where $\boldJ_\boldx \boldpsi(\boldx)\in \mathbb{R}^{\mathrm{dim}(\boldtheta) \times d}$ is the Jacobian of $\boldpsi(\boldx)$ and $\dim(\boldtheta)$ is the dimension of $\boldtheta$ . 
\end{example}
One more example can be found at Appendix, Section \ref{sec.stein.ex}.
A slightly different Stein operator was introduced in \citep{Chwialkowski2016,oates2017control} where
$T'_{\boldtheta}\boldf(\boldx) \in \mathbb{R}$ for $\boldf(\boldx) \in \mathbb{R}^d$ is defined as $T'_{\boldtheta} \boldf(\boldx) := \sum_{i=1}^{d} \left[\partial_{x_i} \log p(\boldx;\boldtheta)\right] f_i(\boldx) + \partial_{x_i} f_i(\boldx),$ where $\partial_{x_i} f(\boldx)$ is the partial derivative of $f(\boldx)$ with respect to $x_i$.
We can see the relationship between $T$ and $T'$: $T_{\boldtheta} f_i(\boldx) = T'_{\boldtheta} \nabla_\boldx f_i(\boldx)$.
Next we show an important property of Stein features. 
\begin{prop}[Stein's Identity]
	\label{prop.stein.equality}Suppose $p(\boldx;\boldtheta)>0$, \[\forall_{i,j} \lim_{|x_j|\to \infty} p(x_1,\cdots,x_j,\cdots,x_d;\boldtheta)\partial_{x_j}f_i(x_1,\cdots,x_j,\cdots,x_d) = 0,\] 
	$p(\boldx;\boldtheta)$ is continously differentiable and $f_i$ is twice continuously differentiable for all $\boldtheta$ and $i$.
Then 
$	\mathbbE_{p_\boldtheta} [T_{\boldtheta} \boldf(\boldx)] = \boldzero
$ for all $\boldtheta$.
\end{prop}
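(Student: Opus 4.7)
The plan is to reduce the expectation of each component $T_{\boldtheta}f_i(\boldx)$ to an integral of a total divergence, and then dispatch it by the fundamental theorem of calculus using the decay condition in the statement.

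First I would multiply through by $p(\boldx;\boldtheta)$ and use the elementary identity $p(\boldx;\boldtheta)\nabla_{\boldx}\log p(\boldx;\boldtheta) = \nabla_{\boldx}p(\boldx;\boldtheta)$ (which is valid since $p>0$ and $p$ is continuously differentiable). This turns the integrand of $\mathbb{E}_{p_\boldtheta}[T_{\boldtheta}f_i(\boldx)]$ into
\begin{equation*}
p(\boldx;\boldtheta)\,T_{\boldtheta}f_i(\boldx) = \sum_{j=1}^d \bigl[\partial_{x_j}p(\boldx;\boldtheta)\bigr]\,\partial_{x_j}f_i(\boldx) + p(\boldx;\boldtheta)\,\partial_{x_j}^2 f_i(\boldx) = \sum_{j=1}^d \partial_{x_j}\!\bigl[p(\boldx;\boldtheta)\,\partial_{x_j}f_i(\boldx)\bigr],
\end{equation*}
so the integrand is exactly the divergence of the vector field $\boldx \mapsto p(\boldx;\boldtheta)\nabla_{\boldx}f_i(\boldx)$. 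The smoothness hypotheses on $p$ and $f_i$ are what make this rewriting legitimate.

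Next I would integrate term by term over $\mathbb{R}^d$. For each coordinate $j$, apply Fubini to move the $x_j$ integral to the inside, and then invoke the one-dimensional fundamental theorem of calculus on $\partial_{x_j}[p(\boldx;\boldtheta)\,\partial_{x_j}f_i(\boldx)]$, which evaluates to
\begin{equation*}
\lim_{x_j\to +\infty}\! p(\boldx;\boldtheta)\,\partial_{x_j}f_i(\boldx) \; - \; \lim_{x_j\to -\infty}\! p(\boldx;\boldtheta)\,\partial_{x_j}f_i(\boldx).
\end{equation*}
Both one-sided limits vanish by exactly the hypothesis $\lim_{|x_j|\to\infty} p(\boldx;\boldtheta)\,\partial_{x_j}f_i(\boldx) = 0$ stated in the proposition. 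Summing over $j$ and then ranging $i$ from $1$ to $b$ yields $\mathbb{E}_{p_\boldtheta}[T_{\boldtheta}\boldf(\boldx)] = \boldzero$.

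The only real obstacle is a technical one: justifying the interchange of integration and differentiation (Fubini's theorem) needed to peel off one variable at a time. I would handle this by implicitly assuming the integrand is absolutely integrable — this is a standard regularity requirement that the statement does not spell out but that is consistent with its spirit (the quantity $\mathbb{E}_{p_\boldtheta}[T_{\boldtheta}\boldf(\boldx)]$ must exist in the first place). Everything else is just bookkeeping after the key algebraic observation that $p\,T_{\boldtheta}f_i$ is a pure divergence.
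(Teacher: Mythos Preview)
Your argument is correct and is essentially the same as the paper's: both use $p\nabla_\boldx\log p=\nabla_\boldx p$ and reduce the $j$-th summand to a one-dimensional boundary term that vanishes by the decay hypothesis. The only cosmetic difference is that you first collapse the two terms into the single derivative $\partial_{x_j}[p\,\partial_{x_j}f_i]$ and then apply the fundamental theorem of calculus, whereas the paper keeps the two terms separate and applies integration by parts to one of them so that the other cancels---these are the same computation read in opposite directions.
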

We give a proof in Appendix Section \ref{sec.proof.stein.eq}. 
Similar identities were given in previous literatures such as Lemma 2.2 in \citep{LiuQ2016} or Lemma 5.1 in \citep{Chwialkowski2016}.
Utilizing this property, we can construct a density ratio model which bypasses the ``intractable equality constraint''
issue when estimating $\frac{q(\boldx)}{p(\boldx;\boldtheta)}$ as shown in the next section.

\subsection{Stein Density Ratio Modeling and Estimation (SDRE)}
\label{sec.stein.dre}
Define a linear-in-parameter density ratio model:
$
	r_\boldtheta(\boldx; \bolddelta) := \bolddelta^\top T_{\boldtheta} \boldf(\boldx) + 1
$
by using a Stein feature function. We can see that $\mathbb{E}_{p_\boldtheta} [r_\boldtheta(\boldx; \bolddelta)] = \mathbb{E}_{p_{\boldtheta}}[\bolddelta^\top T_{\boldtheta} \boldf(\boldx) + 1] = \bolddelta^\top\mathbb{E}_{p_{\boldtheta}}[ T_{\boldtheta} \boldf(\boldx)] +1 = 1$ where the last equality is ensured by Proposition \ref{prop.stein.equality} for all $\bolddelta$ and $\boldtheta$ if the specified regularity conditions are met. This equality means the constraint in \eqref{eq.kliep} is automatically satisfied with this density ratio model.
Now we can solve \eqref{eq.kliep.emp} without its equality constraint.
\begin{align}
	\label{eq.steinDR}
	\hat{\bolddelta} :=& \argmin_{\bolddelta\in\mathbb{R}^{b}} -\frac{1}{n_q} \sum_{i=1}^{n_q} \log r_\boldtheta(\boldx_q^{(i)};\bolddelta)  + C
	= \argmax_{\bolddelta\in\mathbb{R}^{b}} \frac{1}{n_q} \sum_{i=1}^{n_q} \log \left[\bolddelta^\top T_{\boldtheta} \boldf(\boldx_q^{(i)}) + 1 \right].
\end{align}  
It can be seen that \eqref{eq.steinDR} is an unconstrained \emph{concave} maximization problem. Note for all $\boldx_q \in X_q$, $r_\boldtheta(\boldx_q;\hat{\bolddelta})$ \emph{must be strictly positive} thanks to the \emph{log-barrier} (see e.g., Section 17.2 in \citep{NoceWrig06}) in our objective function. However, it is \emph{not} possible to guarantee that for all $\boldx \in \mathbb{R}^d$, $r_\boldtheta(\boldx;\hat{\bolddelta})$ is positive. This is not a problem in this paper, as the density ratio function is only used for approximating the KL divergence, and we will not evaluate $r_\boldtheta(\boldx;\hat{\bolddelta})$ at a data point $\boldx$ that is outside of $X_q$. Note, the unnormalized density model $\bar{p}(\boldx;\boldtheta)$, by definition, should be non-negative everywhere for all $\boldtheta$. 

We refer to the objective \eqref{eq.steinDR} as Stein Density Ratio Estimation (SDRE). 
One may notice that $\frac{1}{n_q} \sum_{i=1}^{n_q} \log r_\boldtheta(\boldx_q^{(i)};\hat{\bolddelta})$ evaluated at $\hat{\bolddelta}$ is exactly the sample average of the estimated ratio over  $X_q$ which allows us to approximate the KL divergence from $p(\boldx;\boldtheta)$ to $q(\boldx)$.

\section{Intractable Model Inference via Discriminative Likelihood Estimation}
\label{sec.parafitting}
Let $\ell(\hat{\bolddelta}, \boldtheta) := \frac{1}{n_q} \sum_{i=1}^{n_q} \log r_\boldtheta(\boldx_q^{(i)};\hat{\bolddelta})$. We will use $\ell(\hat{\bolddelta}, \boldtheta)$ as a replacement of $\mathrm{KL}\left[q(\boldx)|p(\boldx;\boldtheta) \right]$. The rationale of minimizing KL divergence from $p(\boldx;\boldtheta)$ to $q(\boldx)$ leads to:
\begin{align}
\label{eq.minmax.fitting}
	\min_\boldtheta \ell(\hat{\bolddelta}, \boldtheta)
	\text{ or equivalently } \min_\boldtheta \max_\bolddelta \ell(\bolddelta, \boldtheta).
\end{align}
The equivalence is due to the fact that $\ell$ evaluated at the optimal ratio parameter $\hat{\bolddelta}$ is also the maximum of the DRE objective function when being optimized w.r.t. $\bolddelta$.
The outer problem minimizes $\ell$ with respect to the density parameter $\boldtheta$.  We call this estimator \textbf{Discriminative Likelihood Estimation} (DLE) as the parameter of the density model $p(\boldx; \boldtheta)$ is learned via minimizing a \emph{discriminator}\footnote{The word ``discriminator'' is borrowed from GAN \citep{Goodfellow2014}. Indeed, DLE and GAN bears many resemblances.  }, which is the likelihood ratio function $\ell(\hat{\bolddelta},\boldtheta)$ measuring the differences between $q(\boldx)$ and $p(\boldx;\boldtheta)$.

\subsection{Consistency with Correct Model}
For brevity, we state all theorems assuming all regularity conditions in Proposition \ref{prop.stein.equality} are met.

\paragraph{Notations:}
$\boldH$ is $\nabla_{(\bolddelta, \boldtheta)}^2\ell(\bolddelta,\boldtheta)$, the full Hessian of  $\ell(\bolddelta,\boldtheta)$. 
$\boldH_{\bolddelta, \boldtheta}$ is $\nabla_\bolddelta\nabla_\boldtheta \ell(\bolddelta, \boldtheta)$, submatrix of the Hessian matrix whose rows and columns indexed by $\bolddelta, \boldtheta$ respectively.  $\bolds\in \mathbb{R}^{\mathrm{dim}(\boldtheta)}$ is $\nabla_\boldtheta \log p(\boldx,\boldtheta)$ evaluated at $\boldtheta^*$, score function of $p(\boldx;\boldtheta)$. $\lambda(\cdot)$ is the eigenvalue operator. $\lambda_\mathrm{min}(\cdot)$ or $\lambda_\mathrm{max}(\cdot)$ is the minimum or maximum eigenvalue and $\|\cdot\|$ is the operator norm.

We study the consistency of the following  estimator under a correct model setting.
\begin{align}
\label{eq.thm.obj}
(\hat{\bolddelta}, \hat{\boldtheta}) := \arg \min_{\boldtheta \in \Theta}  \max_{\bolddelta \in \Delta_{n_q}} \ell(\bolddelta, \boldtheta),
\end{align}	
where $\Theta$ and $\Delta_{n_q}$ are \emph{compact} parameter spaces for $\boldtheta$ and $\bolddelta$ respectively. 
The compactness condition is among a set of conditions commonly used in classic consistency proofs (see e.g., Wald's Consistency Proof, 5.2.1,\citep{vaart_1998}). 
It is possible to derive weaker conditions given specific choices of $\boldf$ or $p(\boldx;\boldtheta)$. However, in the current manuscript, we only focus on more generic settings and conditions that would give rise to estimation consistency and useful asymptotic theories. 
We assume they are properly chosen so that $(\hat{\boldtheta}, \hat{\bolddelta})$ is the saddle point of \eqref{eq.thm.obj}. 

First, we assume our density model $p(\boldx;\boldtheta)$ is correctly specified: 
\begin{assumption}
	\label{eq.model}
	There exists a unique pair of parameter $(\boldtheta^*, \bolddelta^*), \boldtheta^*\in \Theta$, $\bolddelta^* \in \Delta_{n_q}$, such that $p(\boldx; \boldtheta^*)  \equiv q(\boldx)$ and $r_{\boldtheta^*}(\boldx;\bolddelta^*) = 1$.
\end{assumption}
Given how $r_\boldtheta(\boldx;\bolddelta)$ is constructed in \Cref{sec.stein.dre}, the above assumption implies $\bolddelta^*$ must be $\boldzero$.
\begin{assumption}
	\label{assum.info.hessian}
	There exist constants $\Lambda_\mathrm{min}>0, \Lambda'_\mathrm{min}>0$ and $\Lambda_\mathrm{max}>0$ so that $\forall \boldtheta \in \Theta, \bolddelta \in \Delta_{n_q}$
	\begin{align*}
	&\lambda_\mathrm{min}\left\{-\boldH_{\bolddelta,\bolddelta}\right\} \ge \Lambda'_\mathrm{min}, 	\Lambda_\mathrm{max}\ge\left\|\boldH_{\boldtheta,\bolddelta}\boldH_{\bolddelta,\bolddelta}^{-1}\right\|, \lambda_\mathrm{min}\left\{-\boldH_{\boldtheta, \bolddelta}\boldH_{\bolddelta,\bolddelta}^{-1}\boldH_{\bolddelta,\boldtheta}\right\} \ge \Lambda_\mathrm{min} > 2\left\|\boldH_{\boldtheta,\boldtheta}\right\|.
	\end{align*}
\end{assumption}
The lower-boundedness of $\lambda_\mathrm{min}\left\{-\boldH_{\bolddelta,\bolddelta}\right\}$ implies the \emph{strict} concavity of $\ell(\bolddelta, \boldtheta)$ with respect to $\bolddelta$ ($\ell(\bolddelta, \boldtheta)$ is already concave by construction, see \eqref{eq.steinDR}): For all $\boldtheta \in \Theta$, there exists a unique $\hat{\bolddelta}(\boldtheta)$ that maximizes the likelihood ratio, which means the likelihood ratio function should always have sufficient discriminative power to precisely pinpoint the differences between our data and the current model $\boldtheta$.
It also ensures that $\bolddelta$ can ``teach'' the model parameter $\boldtheta$ well by assuming the ``interaction'' between $\bolddelta$ and $\boldtheta$ in our estimator, $\boldH_{\boldtheta,\bolddelta}$, is well-behaved.

Now we analyze Assumption \ref{assum.info.hessian} on a special case: 
\begin{prop}
\label{prop.ass2.expo}
Let $\Delta_{n_q} := \left\{\bolddelta \bigg\rvert \frac{1}{C_\mathrm{ratio}} \le r_\boldtheta(\boldx;\bolddelta) \le C_\mathrm{ratio}, \| \bolddelta \|_2 \le T/\sigma(n_q), \forall \boldtheta \in \Theta, \forall \boldx \in X_q,  \right\}$ where $T>0, C_\mathrm{ratio}>1$ are constants and $\sigma(\cdot)$ is a monotone-increasing function. $p(\boldx;\boldtheta)$ is in exponential family with sufficient statistic $\boldpsi(\boldx)$ and Stein feature is chosen as $T_{\boldtheta}\boldpsi(\boldx)$. Suppose there exist constants $ C_2, C_3, C_4, C_5, \Lambda''_\mathrm{max},\Lambda''_\mathrm{min} >0, C_2 \ge \frac{1}{n_q} \sum_{i=1}^{n_q} \|\boldJ_\boldx \boldpsi(\boldx_q^{(i)})\|^4,$
\begin{align*}
	&\lambda_{\mathrm{min}}\left\{ \frac{1}{n_q} \sum_{i=1}^{n_q} \boldJ_\boldx \boldpsi(\boldx_q^{(i)}) \boldJ_\boldx \boldpsi(\boldx_q^{(i)})^\top\right\} \ge C_3,
	\frac{1}{n_q} \sum_{i=1}^{n_q} \|\boldJ_\boldx \boldpsi(\boldx_q^{(i)})
	{\boldJ_\boldx
	\boldpsi(\boldx_q^{(i)})}^\top\|\le C_4,\\
	&\frac{1}{n_q} \sum_{i=1}^{n_q} \|\boldJ_\boldx \boldpsi(\boldx_q^{(i)})
	{\boldJ_\boldx
	\boldpsi(\boldx_q^{(i)})}^\top\|\cdot \|T_\boldtheta \bold\boldpsi(\boldx_q^{(i)}) \|\le C_5
\end{align*}
 and 
$\Lambda''_\mathrm{max} \ge \lambda \left( \frac{1}{n_q} \sum_{i=1}^{n_q} T_{\boldtheta}\boldpsi(\boldx_q^{(i)}) T_{\boldtheta}\boldpsi(\boldx_q^{(i)})^\top\right) \ge \Lambda''_{\mathrm{min}}, \forall \boldtheta\in \Theta$ with high probability. There exists a constant $N>1$, when $n_q \ge N$, Assumption \ref{assum.info.hessian} holds with high probability.  
\end{prop}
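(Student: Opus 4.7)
The plan is to write each block of $\boldH$ in closed form using the exponential-family/Stein-feature structure, and then verify the three inequalities of Assumption~\ref{assum.info.hessian} uniformly in $(\boldtheta,\bolddelta)\in\Theta\times\Delta_{n_q}$. The driving observation is that the constraint $\|\bolddelta\|_2 \le T/\sigma(n_q)$ forces $\bolddelta\to\boldzero$ as $n_q\to\infty$, so the Hessian behaves at first order like a deterministic ``information-matrix'' limit.

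Setting $\boldu_i := T_\boldtheta\boldpsi(\boldx_q^{(i)})$, $r_i := \bolddelta^\top\boldu_i + 1$, and $\boldsymbol{M}_i := \boldJ_\boldx\boldpsi(\boldx_q^{(i)})\boldJ_\boldx\boldpsi(\boldx_q^{(i)})^\top$, Example~\ref{ex.expo} gives $\boldu_i = \boldsymbol{M}_i\boldtheta + \boldv_i$ with $\boldv_i$ collecting the trace terms, so $r_i$ is linear in $\boldtheta$ with $\nabla_\boldtheta r_i = \boldsymbol{M}_i\bolddelta$. Differentiating $\ell = \tfrac{1}{n_q}\sum_i\log r_i$ twice then yields
\[
-\boldH_{\bolddelta,\bolddelta} = \tfrac{1}{n_q}\textstyle\sum_i \tfrac{\boldu_i\boldu_i^\top}{r_i^2},\quad
\boldH_{\boldtheta,\bolddelta} = \tfrac{1}{n_q}\textstyle\sum_i \tfrac{\boldsymbol{M}_i}{r_i} - \tfrac{1}{n_q}\textstyle\sum_i \tfrac{\boldsymbol{M}_i\bolddelta\boldu_i^\top}{r_i^2},\quad
-\boldH_{\boldtheta,\boldtheta} = \tfrac{1}{n_q}\textstyle\sum_i \tfrac{(\boldsymbol{M}_i\bolddelta)(\boldsymbol{M}_i\bolddelta)^\top}{r_i^2}.
\]
The first two inequalities of Assumption~\ref{assum.info.hessian} then follow routinely: sandwiching with $r_i\in[C_\mathrm{ratio}^{-1},C_\mathrm{ratio}]$ gives $\lambda_\mathrm{min}(-\boldH_{\bolddelta,\bolddelta}) \ge \Lambda''_\mathrm{min}/C_\mathrm{ratio}^2$ and $\|\boldH_{\bolddelta,\bolddelta}^{-1}\| \le C_\mathrm{ratio}^2/\Lambda''_\mathrm{min}$, while the triangle inequality on $\boldH_{\boldtheta,\bolddelta}$ with $\tfrac{1}{n_q}\sum\|\boldsymbol{M}_i\|\le C_4$ and $\tfrac{1}{n_q}\sum\|\boldsymbol{M}_i\|\|\boldu_i\|\le C_5$ gives $\|\boldH_{\boldtheta,\bolddelta}\| \le C_\mathrm{ratio}C_4 + C_\mathrm{ratio}^2 TC_5/\sigma(n_q)$, hence a uniform bound $\Lambda_\mathrm{max}$ on $\|\boldH_{\boldtheta,\bolddelta}\boldH_{\bolddelta,\bolddelta}^{-1}\|$.

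The main obstacle is the third inequality, where I need both a uniform lower bound on $\lambda_\mathrm{min}(-\boldH_{\boldtheta,\bolddelta}\boldH_{\bolddelta,\bolddelta}^{-1}\boldH_{\bolddelta,\boldtheta})$ and a vanishing upper bound on $\|\boldH_{\boldtheta,\boldtheta}\|$. My strategy is to split $\boldH_{\boldtheta,\bolddelta} = \boldsymbol{N} + \boldsymbol{E}$ with $\boldsymbol{N} := \tfrac{1}{n_q}\sum_i\boldsymbol{M}_i$ (symmetric PSD, with $\lambda_\mathrm{min}(\boldsymbol{N}) \ge C_3$ by assumption) and perturbation $\boldsymbol{E}$; rewriting $r_i^{-1} = 1 - \bolddelta^\top\boldu_i/r_i$ together with the rank-$1$ piece, and using the $C_5$ bound, gives $\|\boldsymbol{E}\| = O(T/\sigma(n_q))$, so Weyl's inequality yields $\sigma_\mathrm{min}(\boldH_{\boldtheta,\bolddelta}) \ge C_3/2$ once $n_q$ is large enough. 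Combined with $\lambda_\mathrm{max}(-\boldH_{\bolddelta,\bolddelta}) \le C_\mathrm{ratio}^2\Lambda''_\mathrm{max}$ and the elementary inequality $\lambda_\mathrm{min}(\boldA\boldB^{-1}\boldA^\top) \ge \sigma_\mathrm{min}^2(\boldA)/\lambda_\mathrm{max}(\boldB)$, this gives $\lambda_\mathrm{min}(-\boldH_{\boldtheta,\bolddelta}\boldH_{\bolddelta,\bolddelta}^{-1}\boldH_{\bolddelta,\boldtheta}) \ge (C_3/2)^2/(C_\mathrm{ratio}^2\Lambda''_\mathrm{max}) =: \Lambda_\mathrm{min}$, a \emph{positive constant independent of $n_q$}. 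On the other hand, $\|\boldH_{\boldtheta,\boldtheta}\| \le C_\mathrm{ratio}^2\|\bolddelta\|^2\cdot\tfrac{1}{n_q}\sum\|\boldsymbol{M}_i\|^2 \le C_\mathrm{ratio}^2 C_2 T^2/\sigma(n_q)^2 \to 0$, since $\|\boldsymbol{M}_i\|^2 = \|\boldJ_\boldx\boldpsi(\boldx_q^{(i)})\|^4$ and $\sigma$ is monotone increasing. Picking $N$ so that $2\|\boldH_{\boldtheta,\boldtheta}\| < \Lambda_\mathrm{min}$ for all $n_q \ge N$ closes the argument; the ``with high probability'' qualifier is inherited from the single probabilistic hypothesis on the eigenvalues of $\tfrac{1}{n_q}\sum_i\boldu_i\boldu_i^\top$.
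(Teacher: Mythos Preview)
Your proposal is correct and follows essentially the same route as the paper: compute the three Hessian blocks explicitly from the exponential-family Stein structure, sandwich using $r_i\in[C_{\mathrm{ratio}}^{-1},C_{\mathrm{ratio}}]$, and exploit $\|\bolddelta\|\to 0$ to make $\|\boldH_{\boldtheta,\boldtheta}\|$ vanish while $\lambda_{\min}(-\boldH_{\boldtheta,\bolddelta}\boldH_{\bolddelta,\bolddelta}^{-1}\boldH_{\bolddelta,\boldtheta})$ stays bounded away from zero. The only tactical difference is in the last step: the paper writes $\boldH_{\boldtheta,\bolddelta}=\boldA-\boldB$ with $\boldA=\tfrac{1}{n_q}\sum_i \boldsymbol{M}_i/r_i$, expands $(\boldA-\boldB)(\boldA-\boldB)^\top$, and invokes a matrix-product eigenvalue inequality, whereas you center on the $\bolddelta$-free limit $\boldsymbol{N}=\tfrac{1}{n_q}\sum_i \boldsymbol{M}_i$ and apply Weyl directly to the singular values of $\boldH_{\boldtheta,\bolddelta}$ together with the elementary bound $\lambda_{\min}(\boldA\boldB^{-1}\boldA^\top)\ge\sigma_{\min}^2(\boldA)/\lambda_{\max}(\boldB)$; your version is slightly cleaner but yields the same constants up to harmless factors.
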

The proof can be found in Appendix, Section \ref{sec.proof.ass2.expo}.
Note in practice the domain constraint of $\Delta_{n_q}$ in this proposition can be easily enforced via convex constraints or penalty terms. Analysis on a few other examples can be found in Appendix, Section \ref{sec.exp.ass2}.

Proposition \ref{prop.ass2.expo} gives us some hints on how the feature function $\boldf$ of Stein feature can be chosen. In the case of exponential family, the choice $\boldf = \boldphi$ guarantees Assumption \ref{assum.info.hessian} to hold with high probability when $n_q$ increases. 

\begin{assumption}[Concentration of Stein features]
	The difference between the sample average of the Stein feature $	T_{\boldtheta^*}\boldf(\boldx)$ and its expectation over $q$ converges to zero in $\ell_2$ norm in probability.
	\label{ass.concentration}
	$
	\left\|\frac{1}{n_q}\sum_{i=1}^{n_q}
	T_{\boldtheta^*}\boldf(\boldx_q^{(i)})
	-  \mathbbE_{q}\left[T_{\boldtheta^*}\boldf(\boldx)\right]\right\|_2 \overset{\mathbb{P}}{\to} 0.$ 
\end{assumption}
Note, if Assumption \ref{eq.model} holds at the same time, Proposition \ref{prop.stein.equality} indicates $\mathbbE_{q}\left[T_{\boldtheta^*}\boldf(\boldx)\right] \equiv \boldzero$.
This assumption holds due to the (strong) law of large numbers given that the $\mathbbE_{q}\left[T_{\boldtheta^*}\boldf(\boldx)\right]$ exists.

\begin{thm}[Consistency]
\label{thm.main}
Suppose Assumption \ref{eq.model},  \ref{assum.info.hessian} and \ref{ass.concentration} holds, $(\hat{\bolddelta}, \hat{\boldtheta}) \overset{\mathbb{P}}{\to} (\boldzero, \boldtheta^*)$.
\end{thm}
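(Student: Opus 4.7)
The plan is to split the saddle-point into its two coordinates using the profile objective $L(\boldtheta):=\max_{\bolddelta\in\Delta_{n_q}}\ell(\bolddelta,\boldtheta)=\ell(\hat{\bolddelta}(\boldtheta),\boldtheta)$, which is well-defined because Assumption~\ref{assum.info.hessian} gives uniform strong concavity of $\ell$ in $\bolddelta$ (the bound $\lambda_{\min}\{-\boldH_{\bolddelta,\bolddelta}\}\ge\Lambda'_{\min}$). I first control the inner maximizer at the truth, $\hat{\bolddelta}(\boldtheta^*)$; next I use Assumption~\ref{assum.info.hessian} as a Schur-complement statement to get strong convexity of $L$, and pair this with optimality of $\hat{\boldtheta}$ to extract consistency; finally, Lipschitzness of $\boldtheta\mapsto\hat{\bolddelta}(\boldtheta)$ transfers the conclusion to the joint statement.

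\emph{Inner coordinate.} Because $r_{\boldtheta^*}(\boldx;\boldzero)=1$, direct differentiation gives $\nabla_{\bolddelta}\ell(\boldzero,\boldtheta^*)=\frac{1}{n_q}\sum_i T_{\boldtheta^*}\boldf(\boldx_q^{(i)})$. Under Assumption~\ref{eq.model}, Proposition~\ref{prop.stein.equality} implies $\mathbb{E}_q[T_{\boldtheta^*}\boldf(\boldx)]=\boldzero$, so Assumption~\ref{ass.concentration} yields $\nabla_{\bolddelta}\ell(\boldzero,\boldtheta^*)\overset{\mathbb{P}}{\to}\boldzero$. The first-order condition at $\hat{\bolddelta}(\boldtheta^*)$ together with the fundamental theorem of calculus reads
\begin{equation*}
-\nabla_{\bolddelta}\ell(\boldzero,\boldtheta^*)=\Big(\int_0^1 \nabla^2_{\bolddelta}\ell\bigl(t\hat{\bolddelta}(\boldtheta^*),\boldtheta^*\bigr)\,\mathrm{d}t\Big)\hat{\bolddelta}(\boldtheta^*),
\end{equation*}
so by $-\nabla^2_{\bolddelta}\ell\succeq\Lambda'_{\min}\boldI$ we obtain $\|\hat{\bolddelta}(\boldtheta^*)\|\le \|\nabla_{\bolddelta}\ell(\boldzero,\boldtheta^*)\|/\Lambda'_{\min}\overset{\mathbb{P}}{\to}0$.

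\emph{Outer coordinate.} By the envelope/Danskin argument, $\nabla L(\boldtheta)=\nabla_{\boldtheta}\ell(\hat{\bolddelta}(\boldtheta),\boldtheta)$ and $\nabla^2 L(\boldtheta)=\boldH_{\boldtheta,\boldtheta}-\boldH_{\boldtheta,\bolddelta}\boldH_{\bolddelta,\bolddelta}^{-1}\boldH_{\bolddelta,\boldtheta}$. Assumption~\ref{assum.info.hessian} gives $-\boldH_{\boldtheta,\bolddelta}\boldH_{\bolddelta,\bolddelta}^{-1}\boldH_{\bolddelta,\boldtheta}\succeq \Lambda_{\min}\boldI$ and $\boldH_{\boldtheta,\boldtheta}\succeq -\|\boldH_{\boldtheta,\boldtheta}\|\boldI\succeq -(\Lambda_{\min}/2)\boldI$, hence $\nabla^2 L(\boldtheta)\succeq (\Lambda_{\min}/2)\boldI$ uniformly on $\Theta$. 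Strong convexity plus optimality of $\hat{\boldtheta}$ give the standard bound $\|\hat{\boldtheta}-\boldtheta^*\|\le (4/\Lambda_{\min})\|\nabla L(\boldtheta^*)\|$. To show $\nabla L(\boldtheta^*)\overset{\mathbb{P}}{\to}\boldzero$, note that $\nabla_{\boldtheta}\ell(\bolddelta,\boldtheta)=\frac{1}{n_q}\sum_i \frac{\bolddelta^\top \nabla_{\boldtheta}[T_{\boldtheta}\boldf(\boldx_q^{(i)})]}{\bolddelta^\top T_{\boldtheta}\boldf(\boldx_q^{(i)})+1}$ vanishes identically at $\bolddelta=\boldzero$; applying $\nabla_{\boldtheta}\ell(\hat{\bolddelta}(\boldtheta^*),\boldtheta^*)=\int_0^1 \boldH_{\boldtheta,\bolddelta}(t\hat{\bolddelta}(\boldtheta^*),\boldtheta^*)\,\mathrm{d}t\cdot\hat{\bolddelta}(\boldtheta^*)$ and bounding the integrand via Assumption~\ref{assum.info.hessian}, we conclude $\|\nabla L(\boldtheta^*)\|\lesssim \|\hat{\bolddelta}(\boldtheta^*)\|\overset{\mathbb{P}}{\to}0$, hence $\hat{\boldtheta}\overset{\mathbb{P}}{\to}\boldtheta^*$. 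Finally, the implicit function theorem applied to $\nabla_{\bolddelta}\ell(\hat{\bolddelta}(\boldtheta),\boldtheta)=\boldzero$ gives $\nabla_{\boldtheta}\hat{\bolddelta}(\boldtheta)=-\boldH_{\bolddelta,\bolddelta}^{-1}\boldH_{\bolddelta,\boldtheta}$, whose norm is $\le \Lambda_{\max}$ uniformly by Assumption~\ref{assum.info.hessian}; therefore $\|\hat{\bolddelta}-\hat{\bolddelta}(\boldtheta^*)\|\le \Lambda_{\max}\|\hat{\boldtheta}-\boldtheta^*\|\overset{\mathbb{P}}{\to}0$, and combining with the inner coordinate we get $\hat{\bolddelta}\overset{\mathbb{P}}{\to}\boldzero$. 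The main obstacle is the envelope-theorem/Schur-complement step: one must certify that $L$ is twice differentiable with the stated Hessian (Assumption~\ref{assum.info.hessian} supplies the needed invertibility and uniform bounds) and that the implicit map $\hat{\bolddelta}(\cdot)$ stays inside the compact set $\Delta_{n_q}$ along the relevant neighborhood of $\boldtheta^*$, which is where compactness of $\Theta,\Delta_{n_q}$ together with Assumption~\ref{assum.info.hessian} is crucial.
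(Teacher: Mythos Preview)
Your argument is correct in spirit and reaches the same conclusion, but it is organized quite differently from the paper. The paper does not introduce the profile function $L(\boldtheta)$ or Danskin's theorem at all; instead it writes the joint first-order optimality conditions $\nabla_{\bolddelta}\ell(\hat{\bolddelta},\hat{\boldtheta})=\boldzero$, $\nabla_{\boldtheta}\ell(\hat{\bolddelta},\hat{\boldtheta})=\boldzero$, applies a coordinate-wise mean-value expansion about $(\bolddelta^*,\boldtheta^*)$, and then eliminates $\hat{\bolddelta}-\bolddelta^*$ algebraically. This yields the closed form
\[
\hat{\boldtheta}-\boldtheta^*=\bigl[-\bar{\boldH}_{\boldtheta,\bolddelta}\bar{\boldH}_{\bolddelta,\bolddelta}^{-1}\bar{\boldH}_{\bolddelta,\boldtheta}+\bar{\boldH}_{\boldtheta,\boldtheta}\bigr]^{-1}\bar{\boldH}_{\boldtheta,\bolddelta}\bar{\boldH}_{\bolddelta,\bolddelta}^{-1}\nabla_{\bolddelta}\ell(\bolddelta^*,\boldtheta^*),
\]
after which Assumption~\ref{assum.info.hessian} bounds the matrix factors and Assumption~\ref{ass.concentration} kills the right-hand side. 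The $\bolddelta$-consistency is then obtained by back-substitution. Your route and the paper's are really the same linear algebra read two ways: your $\nabla^2 L$ is exactly the Schur complement $\bar{\boldH}/\bar{\boldH}_{\bolddelta,\bolddelta}$ that appears when the paper eliminates $\hat{\bolddelta}-\bolddelta^*$, and your Lipschitz bound $\|\nabla_{\boldtheta}\hat{\bolddelta}\|\le\Lambda_{\max}$ is the paper's use of $\|\bar{\boldH}_{\bolddelta,\bolddelta}^{-1}\bar{\boldH}_{\bolddelta,\boldtheta}\|\le\Lambda_{\max}$ in the back-substitution step. What your approach buys is a cleaner conceptual picture (strong convexity of the reduced problem); what the paper's buys is an explicit representation of $\hat{\boldtheta}-\boldtheta^*$ that it reuses verbatim in the asymptotic-normality and AIC proofs.

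One small point to tighten: when you bound $\|\nabla L(\boldtheta^*)\|$ you write $\nabla_{\boldtheta}\ell(\hat{\bolddelta}(\boldtheta^*),\boldtheta^*)=\int_0^1\boldH_{\boldtheta,\bolddelta}\,\mathrm{d}t\cdot\hat{\bolddelta}(\boldtheta^*)$ and say the integrand is bounded ``via Assumption~\ref{assum.info.hessian}''. That assumption controls $\|\boldH_{\boldtheta,\bolddelta}\boldH_{\bolddelta,\bolddelta}^{-1}\|$, not $\|\boldH_{\boldtheta,\bolddelta}\|$ alone. The paper sidesteps this because its algebraic elimination produces the product $\bar{\boldH}_{\boldtheta,\bolddelta}\bar{\boldH}_{\bolddelta,\bolddelta}^{-1}$ directly rather than the two factors separately. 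In your framework you can recover the same thing by substituting your inner-coordinate identity $\hat{\bolddelta}(\boldtheta^*)=-\bigl[\int_0^1\boldH_{\bolddelta,\bolddelta}\,\mathrm{d}t\bigr]^{-1}\nabla_{\bolddelta}\ell(\boldzero,\boldtheta^*)$ before bounding, or simply by invoking compactness of $\Delta_{n_q}\times\Theta$ and continuity of $\ell$ to bound $\|\boldH_{\boldtheta,\bolddelta}\|$ outright; either patch is immediate and at the same level of rigor as the paper's own mean-value-theorem handling.
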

See Section \ref{sec.main.proof} in Appendix for the proof.
This theorem states that as $n_q$ increases, all saddle points of
\eqref{eq.thm.obj},  converge to the vicinity of true parameters. 
All the following theorems rely on the result of Theorem \ref{thm.main}.

\subsection{Asymptotic Variance of $\hat{\boldtheta}$ and Fisher Efficiency of DLE}
In this section we state one of our main contributions: 
DLE can attain the efficiency bound, i.e., asymptotic Cram{\'e}r-Rao bound when $\boldf(\boldx)$ is appropriately chosen.
First, we show our estimator $\hat{\boldtheta}$ has a simple asymptotic distribution which allows us to perform model inference. To state the theorem, we need an extra assumption on the Hessian $\boldH$: 
\begin{assumption}[Uniform Convergence on $\boldH$] 
	\label{assum.unifor.conv}
$	\sup_{\bolddelta \in \Delta_{n_q}, \boldtheta \in \Theta}
	\left|H_{i,j} - \mathbb{E}_q\left[H_{i,j}\right]\right| \overset{\mathbb{P}}{\to} 0, \forall_{i,j}.
$\end{assumption}
This assumption states the second order derivatives (which is an average over samples from $X_q$)
converges \textbf{uniformly} to its population mean, as $n_q \to \infty$. It
helps us control the residual in the second order Taylor expansion in our proof. 
This assumption may be weakened given specific choices of $\boldf$
and $p(\boldx;\boldtheta)$ but we focus on establishing the asymptotic results  in generic settings, so this condition is only listed as an
assumption. 

\begin{thm}[Asymptotic Normality of $\hat{\boldtheta}$]
	\label{thm.normality2}
	Suppose Assumption \ref{eq.model},  \ref{assum.info.hessian}, \ref{ass.concentration} and \ref{assum.unifor.conv} holds, 
	\begin{align}
	\label{eq.asym.variance}
	\sqrt{n_q} \left(\boldtheta^* - \hat{\boldtheta} \right) \rightsquigarrow 
	\mathcal{N}\left[0, \boldV \right],
	\boldV = \left(-\mathbb{E}_q\left[\boldH^*_{\boldtheta,\bolddelta}\right] {\mathbb{E}_q\left[\boldH^*_{\bolddelta,\bolddelta}\right]}^{-1}{\mathbb{E}_q\left[\boldH^*_{\bolddelta,\boldtheta}\right]}\right)^{-1},
	\end{align}
	where  $\boldH^*$ is $\boldH$ evaluated at $(\bolddelta^*,\boldtheta^*)$.
\end{thm}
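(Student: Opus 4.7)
The plan is to follow the standard saddle-point M-estimator argument: linearise the two first-order optimality conditions jointly around the truth $(\bolddelta^*,\boldtheta^*)=(\boldzero,\boldtheta^*)$, invert the resulting block system, and apply a CLT to the only stochastic driving term.

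First I would use Theorem \ref{thm.main} to argue that, with probability tending to one, $(\hat{\bolddelta},\hat{\boldtheta})$ lies in the interior of $\Delta_{n_q}\times\Theta$, so that the stationarity conditions $\nabla_\bolddelta\ell(\hat{\bolddelta},\hat{\boldtheta})=\boldzero$ and $\nabla_\boldtheta\ell(\hat{\bolddelta},\hat{\boldtheta})=\boldzero$ hold. Next I would record three identities obtained by direct differentiation of the log-barrier objective \eqref{eq.steinDR}: (i) $\nabla_\boldtheta\ell(\boldzero,\boldtheta)\equiv\boldzero$ for every $\boldtheta$, because each summand has $\bolddelta$ in the numerator; consequently $\boldH^*_{\boldtheta,\boldtheta}=\boldzero$ and $\mathbb{E}_q[\boldH^*_{\boldtheta,\boldtheta}]=\boldzero$; (ii) $\nabla_\bolddelta\ell(\boldzero,\boldtheta^*)=\tfrac{1}{n_q}\sum_i T_{\boldtheta^*}\boldf(\boldx_q^{(i)})$, which by Proposition \ref{prop.stein.equality} under Assumption \ref{eq.model} has mean zero and finite covariance $\boldSigma=\mathbb{E}_q\bigl[T_{\boldtheta^*}\boldf(\boldx)T_{\boldtheta^*}\boldf(\boldx)^\top\bigr]$; (iii) $\boldH^*_{\bolddelta,\bolddelta}=-\tfrac{1}{n_q}\sum_i T_{\boldtheta^*}\boldf(\boldx_q^{(i)})T_{\boldtheta^*}\boldf(\boldx_q^{(i)})^\top$, so $\mathbb{E}_q[\boldH^*_{\bolddelta,\bolddelta}]=-\boldSigma$. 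This last identity is the key structural fact that will make the sandwich form collapse to a single inverse at the end.

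Then I would perform a joint first-order Taylor expansion of the two stationarity equations about $(\boldzero,\boldtheta^*)$. Writing the resulting system in block form and using Assumption \ref{assum.unifor.conv} together with consistency to replace each sample Hessian block evaluated on the path between the estimator and the truth by $\mathbb{E}_q[\boldH^*_{\cdot,\cdot}]+o_p(1)$, I obtain
\begin{align*}
\boldzero &= \nabla_\bolddelta\ell(\boldzero,\boldtheta^*) + \mathbb{E}_q[\boldH^*_{\bolddelta,\bolddelta}](\hat{\bolddelta}-\boldzero) + \mathbb{E}_q[\boldH^*_{\bolddelta,\boldtheta}](\hat{\boldtheta}-\boldtheta^*) + o_p(n_q^{-1/2}), \\
\boldzero &= \mathbb{E}_q[\boldH^*_{\boldtheta,\bolddelta}](\hat{\bolddelta}-\boldzero) + o_p(n_q^{-1/2}),
\end{align*}
where in the second line $\nabla_\boldtheta\ell(\boldzero,\boldtheta^*)=\boldzero$ and $\mathbb{E}_q[\boldH^*_{\boldtheta,\boldtheta}]=\boldzero$ have been used. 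Solving the first equation for $\hat{\bolddelta}-\boldzero$ via Assumption \ref{assum.info.hessian} (which guarantees $\mathbb{E}_q[\boldH^*_{\bolddelta,\bolddelta}]$ is invertible) and substituting into the second, I would obtain
\begin{equation*}
\bigl(-\mathbb{E}_q[\boldH^*_{\boldtheta,\bolddelta}]\mathbb{E}_q[\boldH^*_{\bolddelta,\bolddelta}]^{-1}\mathbb{E}_q[\boldH^*_{\bolddelta,\boldtheta}]\bigr)(\hat{\boldtheta}-\boldtheta^*) = \mathbb{E}_q[\boldH^*_{\boldtheta,\bolddelta}]\mathbb{E}_q[\boldH^*_{\bolddelta,\bolddelta}]^{-1}\nabla_\bolddelta\ell(\boldzero,\boldtheta^*) + o_p(n_q^{-1/2}).
\end{equation*}

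Finally, a classical multivariate CLT gives $\sqrt{n_q}\,\nabla_\bolddelta\ell(\boldzero,\boldtheta^*)\rightsquigarrow\mathcal{N}(\boldzero,\boldSigma)$, and Slutsky's theorem converts the displayed linear relation into asymptotic normality of $\sqrt{n_q}(\hat{\boldtheta}-\boldtheta^*)$. The resulting sandwich covariance is $\boldV\,\boldA\,\boldSigma\,\boldA^\top\boldV$ with $\boldA=\mathbb{E}_q[\boldH^*_{\boldtheta,\bolddelta}]\mathbb{E}_q[\boldH^*_{\bolddelta,\bolddelta}]^{-1}$, and substituting $\boldSigma=-\mathbb{E}_q[\boldH^*_{\bolddelta,\bolddelta}]$ collapses the middle factor, leaving exactly $\boldV$ as stated. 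The main obstacle I expect is controlling the Taylor remainder uniformly: I need Assumption \ref{assum.unifor.conv} to replace the Hessians evaluated at the (random) intermediate points by their population expectations with $o_p(1)$ error, and Assumption \ref{assum.info.hessian} to ensure that the required inverses are well-conditioned along the way; everything else is a bookkeeping exercise.
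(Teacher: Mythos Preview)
Your proposal is correct and follows essentially the same route as the paper: linearise the two stationarity conditions at $(\boldzero,\boldtheta^*)$, replace the intermediate-point Hessian blocks by $\mathbb{E}_q[\boldH^*_{\cdot,\cdot}]+o_p(1)$ via Assumption~\ref{assum.unifor.conv} and consistency, exploit $\nabla_\boldtheta\ell(\boldzero,\boldtheta)\equiv\boldzero$ (hence $\mathbb{E}_q[\boldH^*_{\boldtheta,\boldtheta}]=\boldzero$), solve the block system for $\hat{\boldtheta}-\boldtheta^*$, and apply the CLT to $\sqrt{n_q}\,\nabla_\bolddelta\ell(\boldzero,\boldtheta^*)$ together with the identity $\mathrm{Cov}_q[T_{\boldtheta^*}\boldf]=-\mathbb{E}_q[\boldH^*_{\bolddelta,\bolddelta}]$ to collapse the sandwich. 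The paper's version is terser (it recycles the block-solved expression \eqref{eq.thetadiff.rep} from the consistency proof rather than rederiving it), but the argument is the same.
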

See Section \ref{sec.norm.proof} in Appendix for the proof.
In practice, we do not know $\mathbb{E}_q\left[\boldH^*\right]$, so we may use $\hat{\boldH}$, the Hessian of $\ell(\bolddelta,\boldtheta)$ evaluated at $(\hat{\bolddelta}, \hat{\boldtheta})$ as an approximation to $\mathbb{E}_q\left[\boldH^*\right]$.

Although MLE is also asymptotically normal, important quantities such as Fisher Information Matrix may not be efficiently computed on intractable models. 
In comparison, Theorem \ref{thm.normality2} enables us to compute parameter confidence interval for DLE even on intractable $p_\boldtheta$. 

Now we consider the asymptotic efficiency of the DLE with respect to specific choices of Stein features. 	
Let $\boldV_{\boldf}$ be the asymptotic variance \eqref{eq.asym.variance} using a  Stein feature with a specific choice of $\boldf$.

\begin{lem}
	\label{cor.mono}
	Suppose
	that Assumptions \ref{eq.model},  \ref{assum.info.hessian},  \ref{ass.concentration} and \ref{assum.unifor.conv} hold and 
	$\mathbb{E}_q[T_{\boldtheta^*}\boldf(\boldx) T_{\boldtheta^*}\boldf(\boldx)^{\top}]$ is invertible. 
	Moreover, suppose that the integration and the derivative of
	$\partial_{\theta_i}\int p(\boldx;\boldtheta)  T_{\boldtheta}\boldf(\boldx)\dx$
	is exchangeable for all $i$. 
	$V_{\boldf}
	=
	\left(
	\mathbb{E}_q[\bolds T_{\boldtheta^*}\boldf(\boldx)^{\top}]  \mathbb{E}_q[T_{\boldtheta^*}\boldf(\boldx) T_{\boldtheta^*}\boldf(\boldx)^{\top}]^{-1}  \mathbb{E}_q[T_{\boldtheta^*}\boldf(\boldx)\bolds^{\top}]
	\right)^{-1}.$
\end{lem}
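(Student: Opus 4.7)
My plan is to start from the formula for the asymptotic variance given by Theorem \ref{thm.normality2} and simplify the three expected Hessian blocks by evaluating them at the true parameters $(\bolddelta^*, \boldtheta^*) = (\boldzero, \boldtheta^*)$. Writing $\bolda_i := T_{\boldtheta}\boldf(\boldx_q^{(i)})$, the objective can be cast as $\ell(\bolddelta,\boldtheta) = \frac{1}{n_q}\sum_i \log(1 + \bolddelta^{\top}\bolda_i)$, which makes block-wise differentiation routine. A direct calculation gives
\begin{align*}
\boldH_{\bolddelta,\bolddelta}\big\rvert_{\bolddelta=\boldzero}
&= -\frac{1}{n_q}\sum_i T_{\boldtheta}\boldf(\boldx_q^{(i)})\,T_{\boldtheta}\boldf(\boldx_q^{(i)})^{\top},\\
\boldH_{\boldtheta,\bolddelta}\big\rvert_{\bolddelta=\boldzero}
&= \frac{1}{n_q}\sum_i \nabla_{\boldtheta}T_{\boldtheta}\boldf(\boldx_q^{(i)})^{\top},
\end{align*}
since all terms containing a factor of $\bolddelta$ in the numerator drop out when $\bolddelta=\boldzero$. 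Taking expectations under $q$ then yields
$\mathbb{E}_q[\boldH^{*}_{\bolddelta,\bolddelta}] = -\mathbb{E}_q[T_{\boldtheta^*}\boldf(\boldx)T_{\boldtheta^*}\boldf(\boldx)^{\top}]$, which already matches the middle factor (up to sign) of the claimed expression.

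The main work is in rewriting $\mathbb{E}_q[\boldH^{*}_{\boldtheta,\bolddelta}] = \mathbb{E}_q[\nabla_{\boldtheta}T_{\boldtheta^*}\boldf(\boldx)^{\top}]$ in terms of the score $\bolds$. The key observation is that Proposition \ref{prop.stein.equality} yields the identity $\int p(\boldx;\boldtheta)\,T_{\boldtheta}\boldf(\boldx)\dx = \boldzero$ for every $\boldtheta$ in the parameter space. Differentiating this identity componentwise with respect to $\theta_k$ and pushing the derivative inside the integral (which is exactly the extra exchangeability assumption in the statement of the lemma) gives
\begin{equation*}
\boldzero = \int \bigl[\partial_{\theta_k} p(\boldx;\boldtheta)\bigr] T_{\boldtheta}\boldf(\boldx)\dx + \int p(\boldx;\boldtheta)\,\partial_{\theta_k} T_{\boldtheta}\boldf(\boldx)\dx.
\end{equation*}
Dividing the first integrand by $p(\boldx;\boldtheta)$ and multiplying back turns it into $\mathbb{E}_{p_{\boldtheta}}[\partial_{\theta_k}\log p(\boldx;\boldtheta)\cdot T_{\boldtheta}\boldf(\boldx)]$. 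Evaluating at $\boldtheta = \boldtheta^*$ (so that $p_{\boldtheta^*} = q$ by Assumption \ref{eq.model}), and stacking across $k$, I obtain the Stein-type identity
\begin{equation*}
\mathbb{E}_q\bigl[\nabla_{\boldtheta}T_{\boldtheta^*}\boldf(\boldx)^{\top}\bigr] \;=\; -\,\mathbb{E}_q\bigl[\bolds\,T_{\boldtheta^*}\boldf(\boldx)^{\top}\bigr],
\end{equation*}
so that $\mathbb{E}_q[\boldH^{*}_{\boldtheta,\bolddelta}] = -\mathbb{E}_q[\bolds\,T_{\boldtheta^*}\boldf(\boldx)^{\top}]$ and, by transposition, $\mathbb{E}_q[\boldH^{*}_{\bolddelta,\boldtheta}] = -\mathbb{E}_q[T_{\boldtheta^*}\boldf(\boldx)\bolds^{\top}]$.

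Finally, I substitute these three expressions into the variance formula from Theorem \ref{thm.normality2}. The three negative signs from the three Hessian blocks combine with the overall minus sign in front to leave a single overall minus, which cancels the minus from the inverse middle factor, producing
\begin{equation*}
\boldV_{\boldf}
= \bigl(\mathbb{E}_q[\bolds\,T_{\boldtheta^*}\boldf(\boldx)^{\top}]\,\mathbb{E}_q[T_{\boldtheta^*}\boldf(\boldx)T_{\boldtheta^*}\boldf(\boldx)^{\top}]^{-1}\,\mathbb{E}_q[T_{\boldtheta^*}\boldf(\boldx)\bolds^{\top}]\bigr)^{-1},
\end{equation*}
which is the claimed identity. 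Invertibility of the middle factor is assumed directly; invertibility of the outer product is implicit in Theorem \ref{thm.normality2}'s variance existing. The only nontrivial step is the Stein-type identity, and the delicate point there is justifying that $\partial_{\theta_k}$ passes under the integral sign over $\boldx$—which is precisely why the lemma's hypotheses include the exchangeability assumption.
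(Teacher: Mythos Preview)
Your proposal is correct and follows essentially the same route as the paper's proof: compute the Hessian blocks at $(\bolddelta^*,\boldtheta^*)=(\boldzero,\boldtheta^*)$ to get $-\mathbb{E}_q[\boldH^*_{\bolddelta,\bolddelta}]=\mathbb{E}_q[\boldt\boldt^{\top}]$ and $\mathbb{E}_q[\boldH^*_{\boldtheta,\bolddelta}]=\mathbb{E}_q[\nabla_{\boldtheta}\boldt^{\top}]$, then differentiate the Stein identity $\mathbb{E}_{p_\boldtheta}[T_{\boldtheta}\boldf(\boldx)]=\boldzero$ under the integral to convert $\mathbb{E}_q[\nabla_{\boldtheta}\boldt^{\top}]$ into $-\mathbb{E}_q[\bolds\,\boldt^{\top}]$, and substitute into the variance formula of Theorem~\ref{thm.normality2}. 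Your sign bookkeeping and your justification for why the second term of $\boldH_{\boldtheta,\bolddelta}$ vanishes (it carries a factor of $\bolddelta$) are both fine.
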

The proof is given in Section
\ref{cor.mono.proof} in the Appendix.
Lemma~\ref{cor.mono} expresses asymptotic variance using score function and Stein feature and is used to prove that the variance monotonically decreases as 
the vector space spanned by the Stein feature vectors  becomes larger.
\begin{cor}[Monotonocity of Asymptotic Variance]
    Define the inner product as $\mathbb{E}_q[f g]$ for functions $f$ and $g$.
	Let
	$T_{\boldtheta^*}\boldf(\boldx)=[t_1,\dots,t_{b}]$ and 
	$T_{\boldtheta^*}\bar{\boldf}(\boldx)=[\bar{t}_1,\dots,\bar{t}_{\bar{b}}]$ be
	two Stein feature vectors.
	Assume that
	$\mathrm{span}\{t_1,\ldots,t_{b}\} \subset\mathrm{span}\{\bar{t}_1,\ldots,\bar{t}_{\bar{b}}\}$, 
	where $\mathrm{span}\{\cdots\}$ denotes the linear space spanned by the specified elements. 
	Then, the inequality 
	$V_{\bar{\boldf}}\preceq V_{\boldf}$ holds in the sense of the positive definiteness. 
\end{cor}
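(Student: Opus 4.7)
The plan is to exploit the linear relation between the two Stein feature vectors implied by the span inclusion, reduce both asymptotic variance expressions from Lemma~\ref{cor.mono} to a common Gram structure, and then identify the difference as an orthogonal projection residual.

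First, I would use $\mathrm{span}\{t_1,\ldots,t_b\}\subset\mathrm{span}\{\bar{t}_1,\ldots,\bar{t}_{\bar{b}}\}$ to produce a (possibly non-square) matrix $\boldA\in\mathbb{R}^{b\times\bar{b}}$ such that $T_{\boldtheta^*}\boldf(\boldx) = \boldA\, T_{\boldtheta^*}\bar{\boldf}(\boldx)$ almost surely under $q$. Writing $\boldm := \mathbb{E}_q[T_{\boldtheta^*}\bar{\boldf}(\boldx) T_{\boldtheta^*}\bar{\boldf}(\boldx)^{\top}]$ and $\boldC := \mathbb{E}_q[\bolds\, T_{\boldtheta^*}\bar{\boldf}(\boldx)^{\top}]$, Lemma~\ref{cor.mono} then gives
\begin{align*}
V_{\bar{\boldf}}^{-1} &= \boldC\,\boldm^{-1}\boldC^{\top}, \\
V_{\boldf}^{-1} &= \boldC\boldA^{\top}\left(\boldA\boldm\boldA^{\top}\right)^{-1}\boldA\boldC^{\top}.
\end{align*}
Invertibility of $\boldm$ is given; invertibility of $\boldA\boldm\boldA^{\top}$ follows from the invertibility assumption on $\mathbb{E}_q[T_{\boldtheta^*}\boldf\,T_{\boldtheta^*}\boldf^{\top}] = \boldA\boldm\boldA^{\top}$ in the same lemma (which, together with $\boldm\succ 0$, also forces $\boldA$ to have full row rank).

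Next, I would perform a whitening change of variables. Let $\boldm^{1/2}$ be the positive-definite square root, and set $\boldD := \boldC\,\boldm^{-1/2}$ and $\boldK := \boldA\,\boldm^{1/2}$. Then $V_{\bar{\boldf}}^{-1} = \boldD\boldD^{\top}$ and $V_{\boldf}^{-1} = \boldD\boldK^{\top}(\boldK\boldK^{\top})^{-1}\boldK\boldD^{\top}$. The key observation is that $\boldP := \boldK^{\top}(\boldK\boldK^{\top})^{-1}\boldK$ is the orthogonal projection onto the row space of $\boldK$, so $\boldP\preceq \boldI$ in the PSD order. Consequently $V_{\boldf}^{-1} = \boldD\boldP\boldD^{\top} \preceq \boldD\boldD^{\top} = V_{\bar{\boldf}}^{-1}$.

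Finally, inverting both sides (both matrices are PD by the invertibility assumption on $\mathbb{E}_q[T_{\boldtheta^*}\boldf\, T_{\boldtheta^*}\boldf^{\top}]$ and by $\boldm\succ 0$, which makes $V_{\boldf}$ and $V_{\bar{\boldf}}$ well defined and positive definite), and using the order-reversing property of matrix inversion on positive definite matrices, yields $V_{\bar{\boldf}}\preceq V_{\boldf}$. I expect the only subtle point to be bookkeeping for the non-square case — making sure $\boldA$ having full row rank (so that $\boldK\boldK^{\top}$ is invertible and $\boldP$ is genuinely a projection); everything else is essentially the standard ``adding regressors reduces residual variance'' argument transported to the PSD cone.
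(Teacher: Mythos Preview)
Your proof is correct and is essentially the same orthogonal-projection argument as the paper's. The paper works directly in the function space $L^2(q)$: it shows $V_{\boldf}^{-1}=\mathbb{E}_q[P_{\boldf}\bolds\,(P_{\boldf}\bolds)^{\top}]$, where $P_{\boldf}$ is the $L^2(q)$-projection of the score onto $\mathrm{span}\{t_1,\ldots,t_b\}$, and then invokes monotonicity of nested projections; your coordinatization via $\boldA$ and whitening to obtain the Euclidean projector $\boldP$ is exactly that projection written in an orthonormal basis for $\mathrm{span}\{\bar t_1,\ldots,\bar t_{\bar b}\}$.
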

\begin{proof}
	Let us define $P_{\boldf}\bolds$ as the orthogonal projection of $\bolds$ onto 
	$\mathrm{span}\{t_1,\ldots,t_{b}\}$. 
	A simple calculation yields
	$P_{\boldf}\bolds=\mathbb{E}_q[\bolds T_{\boldtheta^*}\boldf(\boldx)^{\top}]\mathbb{E}_q[T_{\boldtheta^*}\boldf(\boldx)T_{\boldtheta^*}\boldf(\boldx)^{\top}]^{-1} T_{\boldtheta^*}\boldf(\boldx)$, and thus, 
	Lemma~\ref{cor.mono} leads to 
	$V_{\boldf}^{-1}=\mathbb{E}_q[P_{\boldf}\bolds(P_{\boldf}\bolds)^{\top} ]$. 
	From the property of the orthogonal projection (see e.g., Theorem 2.23 in \citep{yanai2011}), we have 
	$\mathbb{E}_q[P_{\bar{\boldf}}\bolds(P_{\bar{\boldf}}\bolds)^{\top}] \succeq\mathbb{E}_q[P_{\boldf}\bolds(P_{\boldf}\bolds)^{\top}]$. 
	Therefore, we obtain $V_{\bar{\boldf}}\preceq V_{\boldf}$. 
\end{proof}
For $Q_{\boldf}\bolds=\bolds-P_{\boldf}\bolds$, we have
$
\mathbb{E}_q[\bolds\bolds^\top] 
=
\mathbb{E}_q[P_{\boldf}\bolds (P_{\boldf}\bolds)^{\top}]+\mathbb{E}_q[Q_{\boldf}\bolds(Q_{\boldf}\bolds)^{\top} ] = 
V_{\boldf}^{-1}+\mathbb{E}_q[Q_{\boldf}\bolds(Q_{\boldf}\bolds)^{\top}]. 
$
Thus, we see that the asymptotic variance converges to the inverse of the Fisher information, 
$\mathbb{E}_q[\bolds\bolds^\top]^{-1}$, as $P_{\boldf}\bolds$ gets close to $\bolds$. 
In particular, when the linear space $\mathrm{span}\{t_1,\ldots,t_{b}\}$ includes $\bolds$, 
$Q_{\boldf}\bolds$ vanishes and consequently the DLE with $\boldf(\boldx)$ is asymptotically efficient. 
\begin{example}
	Let $p(\boldx; \boldtheta)$ be the model of
	the $d$-dimensional multivariate Gaussian distribution $\mathcal{N}(\boldtheta,\mathrm{Iden}\cdot \sigma^2)$, where $\mathrm{Iden}$ is the identity matrix.
	Here the variance $\sigma^2$ is assumed to be known. 
	The score function is $s_j(\boldx;\boldtheta)=-(x_j-\theta_j)/\sigma^2$, and 
	the Stein feature vector defined from $f(\boldx)= \boldx$ 
	is $(T_{\boldtheta}\boldx)_j=-(x_j-\theta_j)/\sigma^2$ for $j=1,\ldots,d$. 
	Clearly, the score function is included in $\mathrm{span}\{t_1,\ldots,t_d\}$. 
	Hence, the DLE with $\boldf$ achieves the efficiency bound of the parameter estimation. 
\end{example}

One more example can be found in Appendix, Section \ref{sec.exp.asym}.
In fact, Corollary \ref{cor.mono} suggests that as long as we can represent the
score function $\bolds$ using Stein feature  $T_\boldtheta \boldf$ up to a
linear transformation, DLE can achieve efficiency bound. However, since
$\boldf$ is coupled with $\nabla_\boldx \log p(\boldx;\boldtheta)$ in
$T_\boldtheta \boldf$, it is not always easy to reverse engineer an $\boldf$
from $\bolds$. Nonetheless, our numerical experiments show that using simple
functions such polynomials as $\boldf$ yields good performance. 

\subsection{Model Selection of DLE}
\label{sec.msel}
As our objective \eqref{eq.minmax.fitting} tries to minimize the discrepancy between our model $p(\boldx;\boldtheta)$ and the data distribution, it is tempting to compare models using the objective function evaluated at $(\hat{\bolddelta},\hat{\boldtheta})$, i.e., $\ell(\hat{\bolddelta}, \hat{\boldtheta})$.
However, the more sophisticated $p(\boldx;\boldtheta)$ becomes, the more likely it picks up spurious patterns of our dataset. 
Similarly, the more powerful the Stein features are, the more likely the discriminator is overly critical to the density model on this dataset. 
Thus a better model selection criterion would be comparing $\mathbb{E}_q \left[\ell(\hat{\bolddelta}, \hat{\boldtheta})\right]$ which eliminates the potential of overfitting a specific dataset. Unfortunately, this expectation cannot be computed without the knowledge on $q(\boldx)$. We propose to approximate this quantity using a \emph{penalized likelihood}:
\begin{thm}
	\label{thm.aic.2}
	Suppose Assumption \ref{eq.model}, \ref{assum.info.hessian}, \ref{ass.concentration} and \ref{assum.unifor.conv} holds. $\mathbb{E}_q\left[\boldH^*_{\bolddelta,\bolddelta}\right]$ and $ \mathbb{E}_q\left[\boldH^*_{\bolddelta,\boldtheta}\right]$ are full-rank and $\mathrm{dim}(\boldtheta) \le b$, then
	$
	n_q\mathbb{E}_q \left[\ell(\hat{\bolddelta}, \hat{\boldtheta})\right] = \min_\boldtheta \max_\bolddelta n_q\ell(\bolddelta, \boldtheta)  -
	b + \mathrm{dim}(\boldtheta) + o_p(1).
	$
\end{thm}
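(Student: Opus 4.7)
The plan is to carry out the classical AIC-style derivation adapted to the Stein-DRE setting: Taylor-expand both the in-sample value $\ell(\hat{\bolddelta}, \hat{\boldtheta})$ and the out-of-sample quantity $L(\hat{\bolddelta}, \hat{\boldtheta}) := \mathbb{E}_q[\ell(\hat{\bolddelta}, \hat{\boldtheta})]$ around the reference point $(\boldzero, \boldtheta^*)$, invoke the saddle-point optimality conditions to eliminate $\hat{\boldu} := (\hat{\bolddelta}, \hat{\boldtheta} - \boldtheta^*)$, and then evaluate the bias in closed form using the block structure of the Hessian.

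The crucial simplifications all come from the specific form $r_\boldtheta(\boldx;\bolddelta) = \bolddelta^\top T_\boldtheta \boldf(\boldx) + 1$. At $\bolddelta = \boldzero$ one has $\log r \equiv 0$ in $\boldtheta$, so $\partial_\boldtheta \log r$ and $\partial_\boldtheta^2 \log r$ both vanish identically at the reference point. Combined with Assumption \ref{eq.model} and Stein's identity (Proposition \ref{prop.stein.equality}), $\mathbb{E}_q[T_{\boldtheta^*}\boldf(\boldx)] = \boldzero$, which kills the $\bolddelta$-gradient of $L$ as well. Hence the expansion of $L(\hat{\bolddelta}, \hat{\boldtheta})$ starts at quadratic order, while the expansion of $\ell$ retains the sample-mean gradient $\boldg := \nabla \ell(\boldzero, \boldtheta^*)$ whose only nonzero block is $\boldg_\bolddelta = \frac{1}{n_q}\sum_i T_{\boldtheta^*}\boldf(\boldx_q^{(i)})$.

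Next, the saddle-point optimality $\nabla \ell(\hat{\bolddelta}, \hat{\boldtheta}) = \boldzero$ together with Assumption \ref{assum.unifor.conv} (to control Hessian remainders) yields the one-step Newton representation $\hat{\boldu} = -\bar{\boldH}^{-1}\boldg + o_p(n_q^{-1/2})$, where $\bar{\boldH} := \mathbb{E}_q[\boldH^*]$. Plugging this into the two Taylor expansions gives $\ell(\hat{\bolddelta}, \hat{\boldtheta}) = -\tfrac{1}{2}\boldg^\top \bar{\boldH}^{-1}\boldg + o_p(1/n_q)$ and $L(\hat{\bolddelta}, \hat{\boldtheta}) = \tfrac{1}{2}\hat{\boldu}^\top \bar{\boldH} \hat{\boldu} + o_p(1/n_q) = \tfrac{1}{2}\boldg^\top \bar{\boldH}^{-1}\boldg + o_p(1/n_q)$, so $n_q(L - \ell) = n_q \boldg_\bolddelta^\top [\bar{\boldH}^{-1}]_{\bolddelta,\bolddelta}\boldg_\bolddelta + o_p(1)$. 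Direct differentiation at $(\boldzero, \boldtheta^*)$ yields $\bar{\boldH}_{\bolddelta,\bolddelta} = -\boldSigma$ with $\boldSigma := \mathbb{E}_q[T_{\boldtheta^*}\boldf\, T_{\boldtheta^*}\boldf^\top]$ and, because of the explicit factor of $\bolddelta$ in $r$, $\bar{\boldH}_{\boldtheta,\boldtheta} = \boldzero$. The block-inverse formula with Schur complement $\boldS = \bar{\boldH}_{\boldtheta,\bolddelta}\boldSigma^{-1}\bar{\boldH}_{\bolddelta,\boldtheta}$ then gives $[\bar{\boldH}^{-1}]_{\bolddelta,\bolddelta}\boldSigma = -\boldI_b + \boldSigma^{-1}\bar{\boldH}_{\bolddelta,\boldtheta}\boldS^{-1}\bar{\boldH}_{\boldtheta,\bolddelta}$, whose trace equals exactly $-b + \dim(\boldtheta)$. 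Here the rank conditions on $\mathbb{E}_q[\boldH^*_{\bolddelta,\bolddelta}]$ and $\mathbb{E}_q[\boldH^*_{\bolddelta,\boldtheta}]$, together with $\dim(\boldtheta) \le b$, are what make $\boldSigma$ and $\boldS$ invertible so the Schur calculation is legal.

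The main obstacle will be reconciling the $o_p(1)$ claim with the fact that the random quadratic form $n_q\,\boldg_\bolddelta^\top [\bar{\boldH}^{-1}]_{\bolddelta,\bolddelta}\boldg_\bolddelta$ only converges \emph{in distribution}, via the CLT applied to $\sqrt{n_q}\,\boldg_\bolddelta$, to a non-degenerate limit whose \emph{mean} is $\mathrm{tr}([\bar{\boldH}^{-1}]_{\bolddelta,\bolddelta}\boldSigma) = -b + \dim(\boldtheta)$. I would therefore read the theorem as identifying the leading deterministic bias separating $n_q L$ from $n_q \ell$ up to a mean-zero stochastic fluctuation of constant order, which after taking the relevant expectation (or equating first moments) yields the claimed identity. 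Controlling the third-order Taylor residuals uniformly over $\Delta_{n_q} \times \Theta$ so that they are genuinely $o_p(1/n_q)$ requires Assumption \ref{assum.unifor.conv} together with the compactness of the parameter spaces already used in Theorem \ref{thm.main} and Theorem \ref{thm.normality2}.
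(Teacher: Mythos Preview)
Your proposal is correct and follows essentially the same route as the paper's proof: Taylor-expand both $\mathbb{E}_q[\ell]$ and $\ell$ at $(\boldzero,\boldtheta^*)$, use the vanishing of the population gradient (via Stein's identity) and of all $\boldtheta$-derivatives at $\bolddelta=\boldzero$, invoke the saddle-point first-order conditions, and reduce the difference $n_q(\mathbb{E}_q[\ell]-\ell)$ to the single quadratic form $n_q\,\boldg_\bolddelta^\top M\,\boldg_\bolddelta$ with $M=-\boldI_{11}^{-1}+\boldI_{11}^{-1}\boldI_{12}(\boldI_{21}\boldI_{11}^{-1}\boldI_{12})^{-1}\boldI_{21}\boldI_{11}^{-1}$, whose trace against $\boldI_{11}$ is $-b+\dim(\boldtheta)$. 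The only organizational difference is that the paper first subtracts the two expansions (so the quadratic terms cancel directly, leaving $-n_q\nabla_\bolddelta\ell^\top(\hat{\bolddelta}-\bolddelta^*)$) and then substitutes $\hat{\bolddelta}-\bolddelta^*$ and $\hat{\boldtheta}-\boldtheta^*$ block-by-block from \eqref{eq.opt.3} and \eqref{eq.thetadiff.rep}; you instead substitute the one-step Newton update $\hat{\boldu}=-\bar{\boldH}^{-1}\boldg$ into each expansion separately and then read off $M$ as $[\bar{\boldH}^{-1}]_{\bolddelta,\bolddelta}$ via the Schur complement. These yield the identical matrix $M$ and the same trace.

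Your concern about the $o_p(1)$ claim is well taken and is handled by the paper exactly as you anticipate: after arriving at the quadratic form, the paper explicitly ``takes the expectation'' to convert $n_q\,\boldg_\bolddelta^\top M\,\boldg_\bolddelta$ into $\mathrm{trace}(M\boldI_{11})=-b+\dim(\boldtheta)$, i.e., the identity in the theorem statement is to be read at the level of first moments (or after an outer expectation over the sample), not as convergence in probability of the raw random quadratic form.
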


See Section \ref{sec.aic2.proof} in Appendix for the proof. This theorem is closely related to a classic result called Akaike Information Criterion (AIC) \citep{Akaike1974}. Both AIC and \Cref{thm.aic.2} similarly penalize the degree of freedom of the density model $\mathrm{dim}(\boldtheta)$, while our theorem also penalizes the number of ratio parameter $\mathrm{dim}(\bolddelta) = b$ due to the fact that our ratio function is also fitted using samples. 

One can also show $\ell(\hat{\bolddelta},\hat{\boldtheta})$ follows a $\chi^2$ distribution. See Section \ref{sec.gof.proof} in Appendix for details. 

Theorem \ref{thm.aic.2} provides an information-criterion based model selection method. 
Suppose $M$ is a set of different Stein features and $M'$ is a set of candidate density models. 
We can jointly select density model and Stein feature:
$(\hat{m},\hat{m}'):=\arg\min_{m'\in M'}\max_{m \in M} \mathbbE_q[ \ell(\hat{\boldtheta}(m'), \hat{\bolddelta}(m)) ]$, where $(\hat{\boldtheta}(m'), \hat{\bolddelta}(m))$ are estimated parameters under the model choice $(m',m)$. Replacing $\mathbbE_q[ \ell(\hat{\boldtheta}(m'), \hat{\bolddelta}(m))]$ with the penalized likelihood derived in Theorem \ref{thm.aic.2}, we can get a practical model selection method. 

\section{Lagrangian Dual of SDRE and DLE by Minimization}
\label{sec.lag}
Some techniques can be used to directly optimize the min-max problem in \eqref{eq.minmax.fitting}, such as performing gradient descend/ascend on $\boldtheta$ and $\bolddelta$ alternately. However, looking for the saddle points of a min-max optimization is hard. In this section, we derive a partial Lagrangian dual for \eqref{eq.minmax.fitting} so we can convert this min-max problem into a min-min problem whose local optima can be efficiently found by existing optimization techniques. 

\begin{prop} 
\label{prop.lag.dual}
SDRE problem in \eqref{eq.steinDR} has a Lagrangian dual:
\begin{align}
\label{eq.lag.0}
    \hat{\boldmu} = \argmin_{\boldmu} \sum_{i=1}^{n_q} [- (\log -\mu_i) -1] - \sum_{i=1}^{n_q} \mu_i \mathrm{ ~~s.t. : }  \sum_{i=1}^{n_q} \mu_i T_{\boldtheta} \boldf(\boldx_q^{(i)})  = \boldzero.
\end{align}
Moreover, the duality gap between \eqref{eq.lag.0} and \eqref{eq.steinDR} is 0
and $r_\boldtheta(\boldx_q^{(i)};\hat{\bolddelta}) = -1/\hat{\mu}_i$.
\end{prop}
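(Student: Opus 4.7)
The plan is to apply standard Lagrangian duality after reformulating \eqref{eq.steinDR} with slack variables. First I would rewrite the (equivalently rescaled) primal as
\[
\max_{\bolddelta,\boldr}\ \sum_{i=1}^{n_q} \log r_i
\quad\text{s.t.}\quad
r_i - \bolddelta^\top T_{\boldtheta} \boldf(\boldx_q^{(i)}) - 1 = 0,\ i=1,\dots,n_q,
\]
where the positivity constraint $r_i>0$ is implicitly enforced by the log-barrier. Introducing multipliers $\mu_i$ for the equality constraints yields the Lagrangian
\[
L(\bolddelta,\boldr,\boldmu) = \sum_{i=1}^{n_q} \log r_i + \sum_{i=1}^{n_q} \mu_i\bigl(r_i - \bolddelta^\top T_{\boldtheta} \boldf(\boldx_q^{(i)}) - 1\bigr).
\]

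Next I would compute the dual function $g(\boldmu)=\sup_{\bolddelta,\boldr} L$. Stationarity in $r_i$ gives $1/r_i + \mu_i = 0$, i.e.\ $r_i = -1/\mu_i$, which forces $\mu_i<0$ (otherwise the sup in $r_i$ blows up) and already delivers the primal--dual identification $r_\boldtheta(\boldx_q^{(i)};\hat\bolddelta) = -1/\hat\mu_i$. Stationarity in $\bolddelta$ produces $\sum_i \mu_i T_{\boldtheta}\boldf(\boldx_q^{(i)}) = \boldzero$, which becomes the dual feasibility constraint (equivalently, $g(\boldmu) = +\infty$ off this affine subspace). Plugging $r_i = -1/\mu_i$ back, so that $\log r_i = -\log(-\mu_i)$ and $\mu_i r_i = -1$, the Lagrangian collapses to $\sum_i [-\log(-\mu_i) - 1] - \sum_i \mu_i$, which is exactly the objective of \eqref{eq.lag.0}.

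Finally, I would close the duality gap by invoking strong duality for concave programs with affine equality constraints. The primal objective is strictly concave on its effective domain and all constraints are affine in $(\bolddelta,\boldr)$; the point $\bolddelta=\boldzero$, $r_i=1$ lies in the relative interior of the effective domain and satisfies the constraints, so the refined Slater condition holds and strong duality follows by standard convex-optimization results (e.g., Boyd \& Vandenberghe, Section 5.2.3). The main, and only mild, obstacle is verifying this constraint qualification; once in hand, the KKT relations derived above immediately supply both the dual form \eqref{eq.lag.0} and the identification $r_\boldtheta(\boldx_q^{(i)};\hat\bolddelta) = -1/\hat\mu_i$, completing the proof.
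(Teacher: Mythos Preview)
Your proposal is correct and follows essentially the same route as the paper's own proof: introduce slack variables $r_i$ (the paper calls them $\epsilon_i$), form the Lagrangian with multipliers $\mu_i$, eliminate $r_i$ via stationarity to obtain $r_i=-1/\mu_i$ and the objective $\sum_i[-\log(-\mu_i)-1]-\sum_i\mu_i$, note that maximizing the linear term in $\bolddelta$ forces the constraint $\sum_i\mu_i T_{\boldtheta}\boldf(\boldx_q^{(i)})=\boldzero$, and then invoke Slater's condition at $(\bolddelta,\boldr)=(\boldzero,\boldone)$ for strong duality. The only cosmetic difference is that you cite the refined Slater condition for affine constraints explicitly, whereas the paper simply asserts Slater holds at that point.
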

See Section \ref{sec.proof.dual} in the Appendix for its proof. 
Instead of solving the min-max problem \eqref{eq.minmax.fitting}, we solve the following constrained minimization problem:
\begin{align}
\label{eq.dual2}
\min_{\boldtheta} \min_{\boldmu} \sum_{i=1}^{n_q} [- (\log -\mu_i) -1] - \sum_{i=1}^{n_q} \mu_i, \mathrm{ ~~s.t. : }  \sum_{i=1}^{n_q} \mu_i T_{\boldtheta} \boldf(\boldx_q^{(i)})  = \boldzero,
\end{align}
where we replace the inner $\max$ problem in \eqref{eq.minmax.fitting} with its Lagrangian \eqref{eq.lag.0}.
All experiments in this paper are performed using the Lagrangian dual objective 
\eqref{eq.dual2}. 
See \url{https://github.com/lamfeeling/Stein-Density-Ratio-Estimation} for code demos on SDRE and model inference.
\vspace*{-4mm}

\section{Related Works}
\label{sec.related}
\textbf{Score Matching (SM) \citep{Hyvaerinen2005,hyvarinen2007}} is a inference method for unnormalized statistical models. It estimates model parameters by minimizing the Fisher divergence \citep{Lyu2009,Sanchez2012} between the true log density and the model log density. To estimate the parameter, this method only requires $\nabla_\boldx \log p(\boldx;\boldtheta)$ and $\nabla_\boldx^2\log p(\boldx;\boldtheta)$ to avoid evaluating the normalization term. 

\textbf{Kernel Stein Discrepancy (KSD) \citep{Barp2019}} is a kernel mean discrepancy measure between a data distribution and a model density using the Stein identity defined on Stein operator $T'_{p_{\boldtheta}}$. This measure has been used for model evaluation \citep{Chwialkowski2016,LiuQ2016}. In \Cref{sec.experiment}, we minimize such a discrepancy with respect to $\boldtheta$ for unnormalized model parameter estimation. A more generic version of this estimator has been discussed in \citep{Barp2019}.

\textbf{Noise Contrastive Estimation (NCE) \citep{gutmann2010noise}} estimates the parameters of an unnormalized statistical model by performing a non-linear logistic regression to discriminate between observed dataset and artificially generated noise. The normalization term can be dealt with like a regular parameter and estimated by such a logistic regression. NCE requires us to select a noise distribution and in our experiments, we use a multivariate Gaussian distribution with mean and variance the same as $X_q$. 

\section{Experiments}
\label{sec.experiment}
\subsection{Validation of Asymptotic Results}
\begin{figure}
    \centering
    \subfloat[\small qqplot of marginals]{
        \label{fig.asy.qqp}
        \includegraphics[width=.24\textwidth]{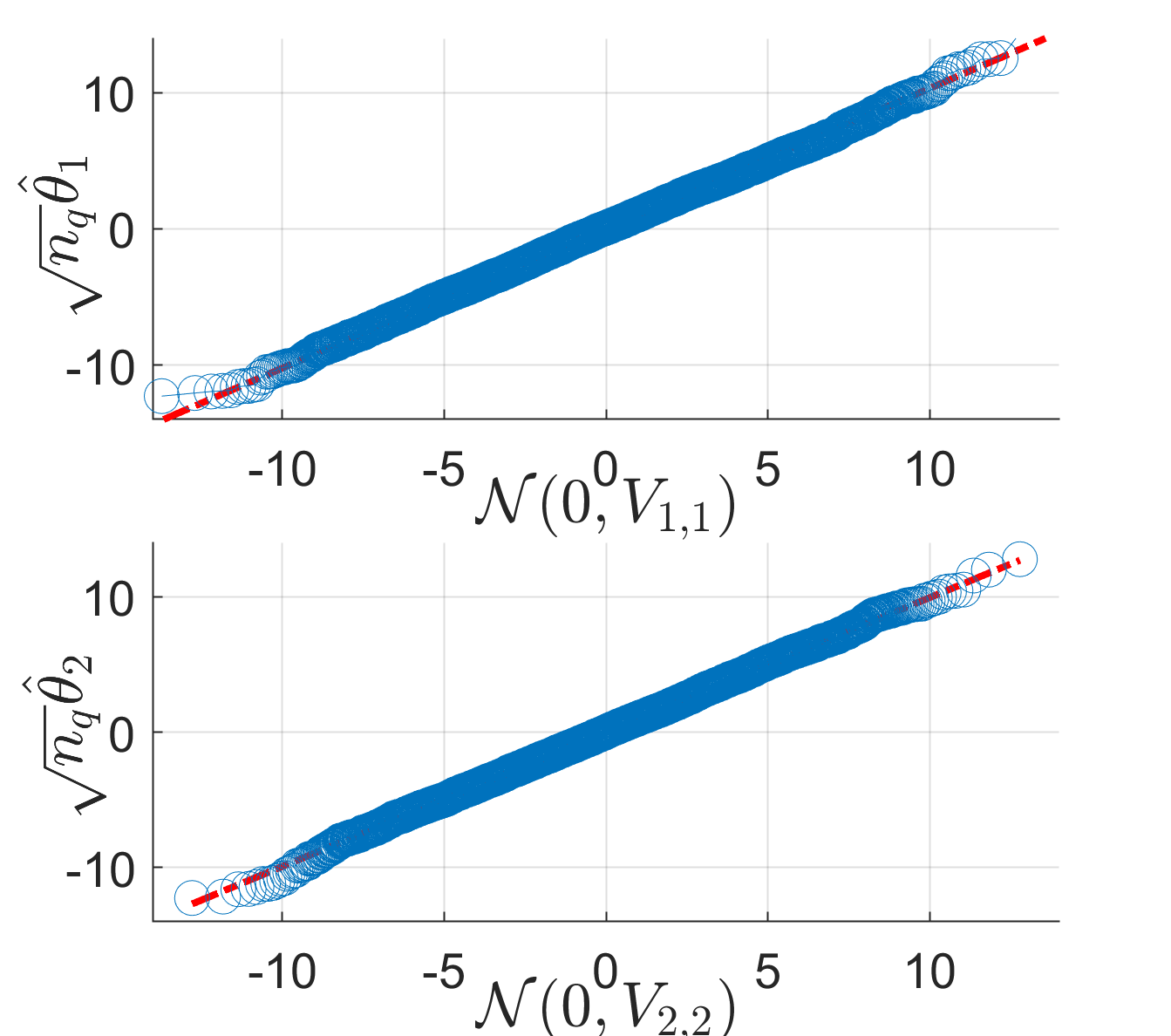}}
    \subfloat[\small $\sqrt{n_q} \hat{\boldtheta}$ vs. C.I.]{
        \label{fig.asy.theta}
        \includegraphics[width=.24\textwidth]{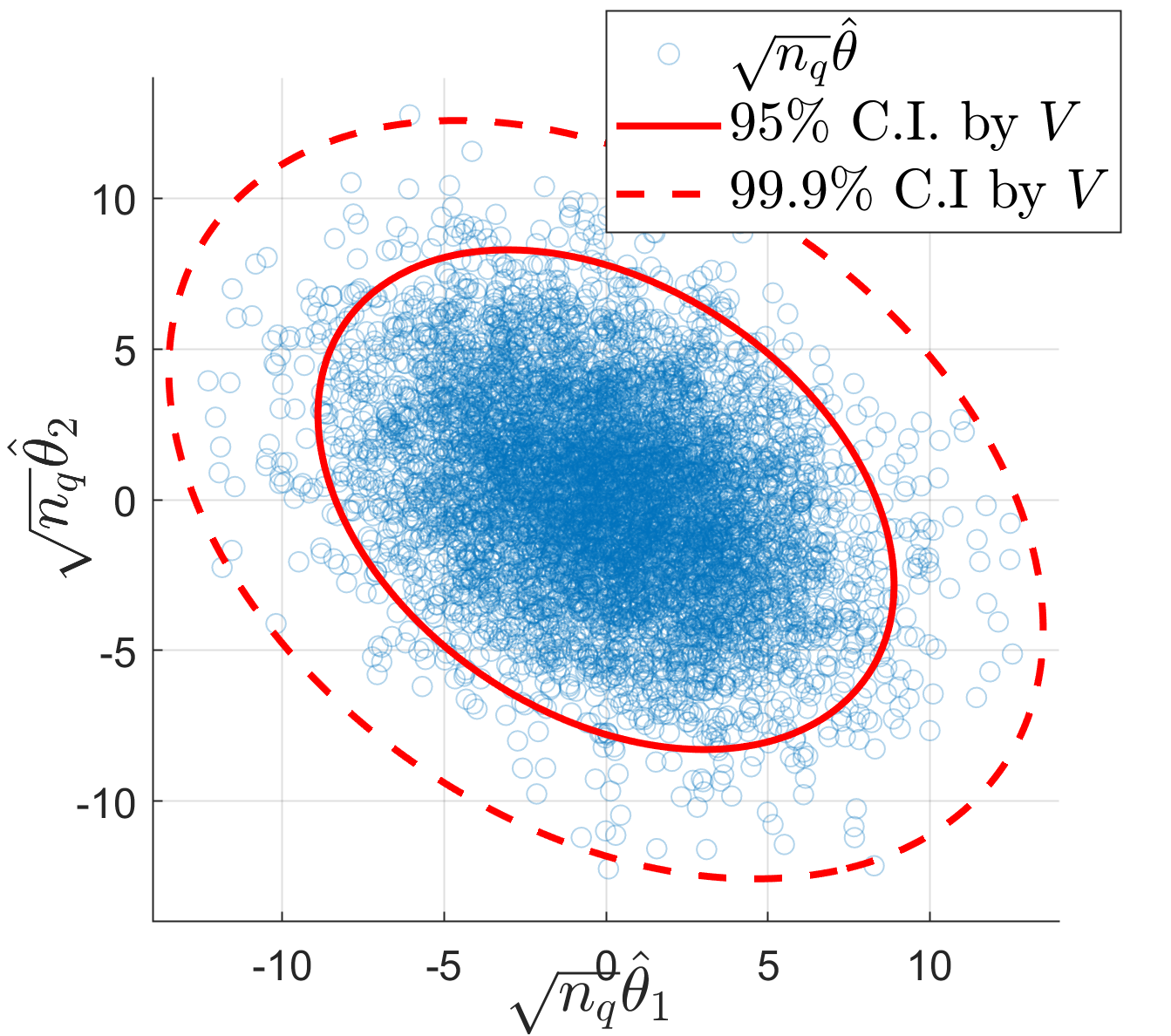}}
    \subfloat[\small$\mathrm{Var}(\hat{\boldtheta})$, Gamma dist.]{
        \label{fig.gamma.var}
        \includegraphics[width=.24\textwidth]{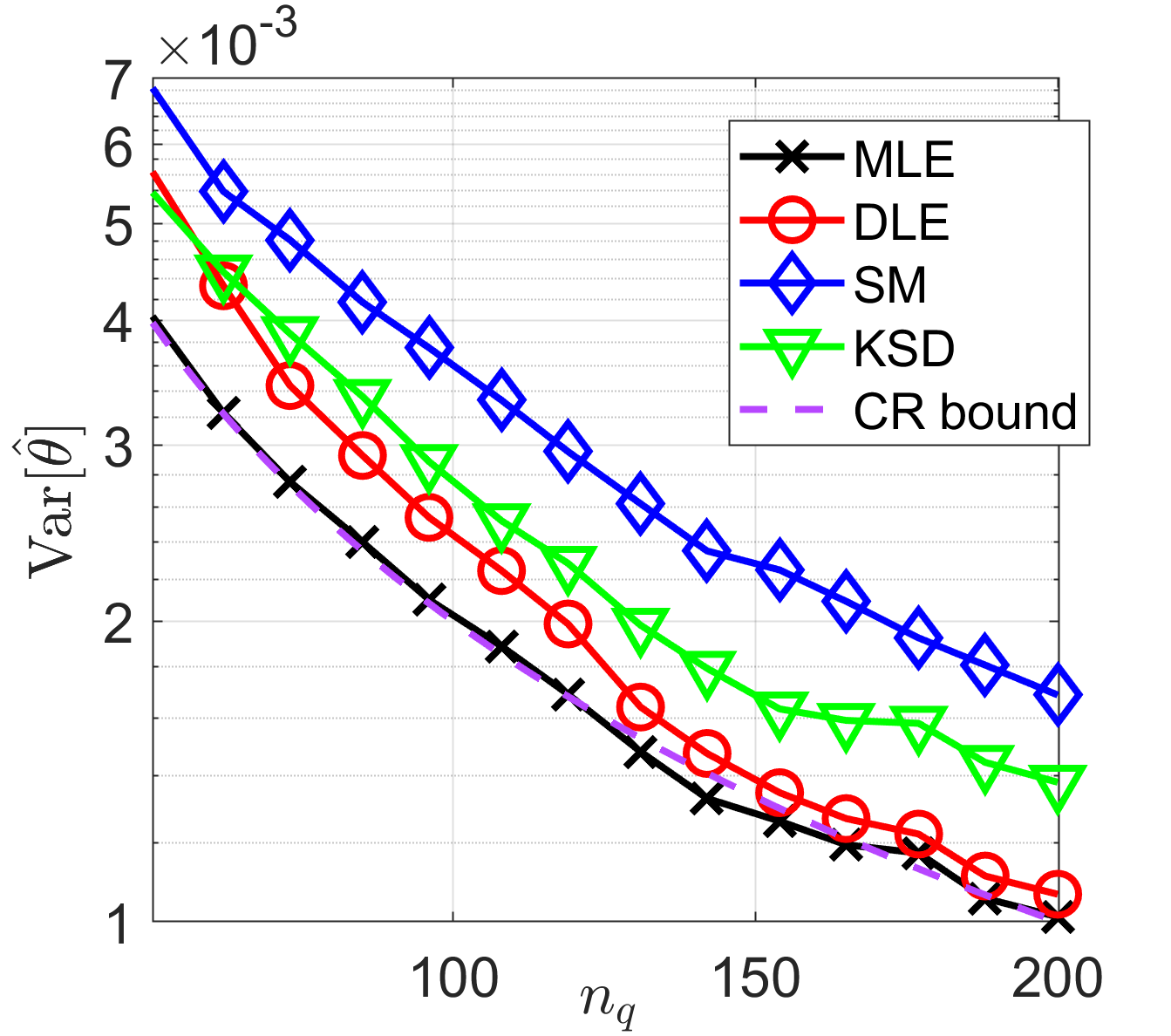}}
    \subfloat[\small $\mathrm{Var}(\hat{\boldtheta})$, Gaussian mix.]{
        \label{fig.gm.var}
        \includegraphics[width=.24\textwidth]{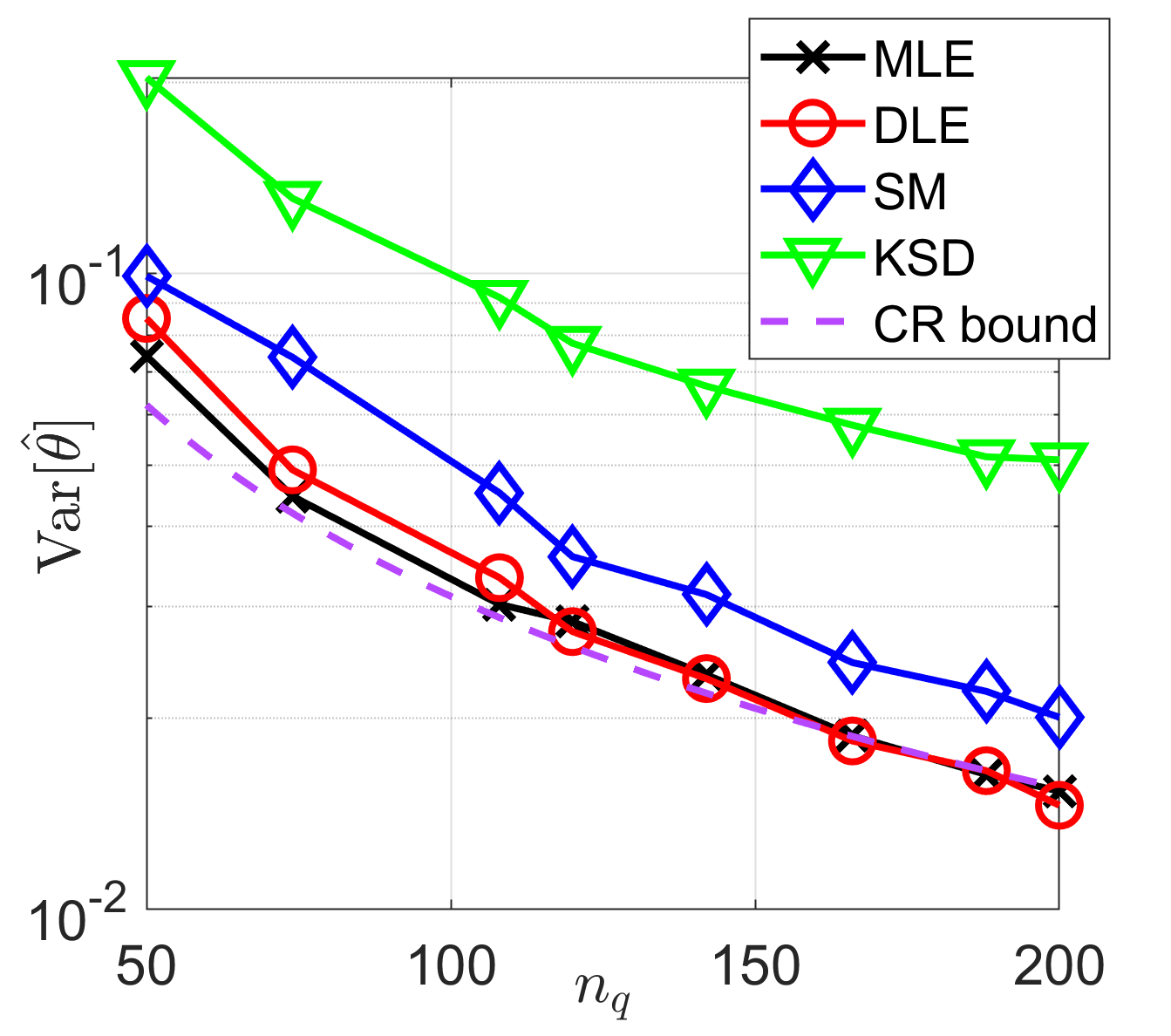}}
    \caption{Theoretical Prediction values vs. Empirical results}
    \label{fig.asym}
\end{figure}
\begin{figure}
    \centering
    \includegraphics[width=.99\textwidth]{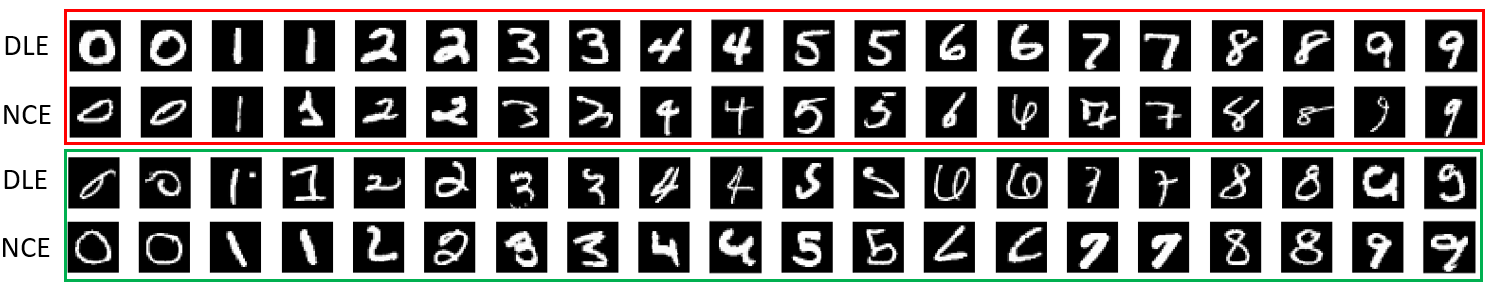}
    \caption{\small MNIST images with the highest (upper red box) and the lowest unnormalized density (lower green box) estimated on each digit by DLE and NCE.}
	\label{fig.mnist}
	\vspace*{-3mm}
\end{figure}
To examine the asymptotic distribution of $\sqrt{n_q}[\hat{\boldtheta}-\boldtheta^*]$, we design an intractable exponential family model $\bar{p}(\boldx;\boldtheta) := \exp\left[\boldeta(\boldtheta)^\top \boldpsi(\boldx)\right]$, where \[\boldpsi(\boldx) := [\sum_{i=1}^d x^2_i, x_1x_2,\sum_{i=3}^d x_1x_i, \tanh(\boldx) ]^\top, \boldeta(\boldtheta) := \left[-.5,.6,.2,0,0,0,\boldtheta \right]^\top, \boldx \in \mathbb{R}^5, \boldtheta \in \mathbb{R}^2.\] $\tanh(\boldx)$ is applied in an element-wise fashion. The feature function of the Stein feature is chosen as $\boldf(\boldx) := \tanh(\boldx) \in \mathbb{R}^5$. Due to the $\tanh$ function, $\bar{p}(\boldx;\boldtheta)$ does not have a closed form normalization term. 
We draw $n_q=500$ samples from $p(\boldx;\boldzero)$ as $X_q$. Given we set $ \boldtheta^* = \boldzero$, $X_q$ actually comes from a tractable distribution. However the intractability of $\bar{p}(\boldx;\boldtheta)$ does not allow us to perform MLE straight away. 

We run DLE 6000 times with new batch of $X_q$ each time and obtain an empirical distribution of $ \sqrt{n_q}\hat{\boldtheta}$. We show qqplots of its marginal distributions vs. $\mathcal{N}(0,V_{1,1})$, $\mathcal{N}(0,V_{2,2})$, the asymptotic distribution predicted by Theorem \ref{thm.normality2} whose variance $V$ is approximated by $X_q$ and $\hat{\boldtheta}$. \Cref{fig.asy.qqp} 
shows all quantiles between the empirical marginals and predicted marginals are very well aligned. We also scatter-plot $\sqrt{n_q}\hat{\boldtheta}$ together with the predicted 95\% and 99.9\% confidence interval in \Cref{fig.asy.theta}. It can be seen that the empirical joint distribution of $\sqrt{n_q}\hat{\boldtheta}$  has the same elongated shape as predicted by Theorem \ref{thm.normality2} and agrees with the predicted confidence intervals nicely. 

One of our major contributions is proving DLE attains the Cram{\'e}r-Rao bound. We now compare the variances of the estimated parameter $\hat{\theta}$ using Gamma $p(x;\theta) = \Gamma(5,\theta), \theta^* =1$ and Gaussian mixture model $p(x; \theta) = .5\mathcal{N}(\theta,1) + .5\mathcal{N}(1,1), \theta^* = -1$ across DLE, SM and KSD. $\mathrm{Var}_{n_q}[\hat{\theta}]$ are shown on Figure \ref{fig.gamma.var} and \ref{fig.gm.var}. For DLE, we set $\boldf(\boldx):=[x,x^2]$ and for KSD, we use a polynomial kernel with degree 2.
Note we particularly choose $p(x;\theta)$ to be tractable so we can compute MLE and Cram{\'e}r-Rao bound easily. It can be seen that all estimators have decreasing variances and MLE, being one of the minimum variance estimators, has the lowest variance. However, DLE has the second lowest variances in both cases and quickly converges to Cram{\'e}r-Rao bound after $n_q=150$. In comparison, both KSD and SM maintain higher levels of variances.  

\subsection{Unnormalized Model Using Pre-trained Deep Neural Network (DNN)}
In this experiment, we create an exponential family model $\bar{p}(\boldx;\boldtheta_\mathrm{i}) := \exp[\boldtheta_\mathrm{i}^\top \boldpsi(\boldx)], \boldx \in \mathbb{R}^{784}$ where $\boldpsi(\boldx) \in \mathbb{R}^{20}$ is a pre-trained 3-layer DNN. 
$\boldpsi(\boldx)$ is trained using a  logistic regression so that the classification error is minimized
on the full MNIST dataset over all digits. Clearly, $\bar{p}(\boldx;\boldtheta_\mathrm{i})$ does not have a tractable normalization term.  The dataset $X_{q_i}$  contains $n_q = 100$ randomly selected images from a \emph{single digit} $\mathrm{i}$ and we use DLE and NCE to estimate $\hat{\boldtheta}_\mathrm{i}$ for each digit $\mathrm{i}$. For DLE, we set $\boldf(\boldx) = \boldpsi(\boldx)$. Though we can only obtain an unnormalized density for each digit, it can be used to rank images and find potential inliers and outliers. 

In \Cref{fig.mnist} we show images that are ranked either among the top two or bottom two places when sorted by $\log\bar{p}(\boldx;\hat{\boldtheta}_\mathrm{i})$, for each digit $\mathrm{i}$. It can be seen that, when $\hat{\boldtheta}$ is estimated by DLE, images ranked the highest are indeed typical looking images, while the lowest ranking images tend to be outliers in that digit group. 
However, in comparison, when $\hat{\boldtheta}$ is estimated by NCE, some highest ranked images are distorted while some lowest ranked image look very regular.  
This experiment shows the usefulness of DLE as a model inference method when working with a complex model (DNN) on a high dimensional dataset ($d=784$) using relatively small number of samples ($n_q = 100$).
\vspace*{-7mm}

\section{Conclusion and Discussion}
\vspace*{-2mm}
In this paper, we introduce a model inference method for unnormalized statistical models. First, Stein density ratio estimation is used to fit a ratio and to approximate the KL divergence. The model inference is done by minimizing such an approximated KL divergence. Despite promising theoretical and experimental results, future works are needed to demonstrate a systematic way of choosing Stein features in different scenarios as the performance of DLE  depends heavily on such choices. 

\section*{Acknowledgements}
This work was supported by The Alan Turing Institute under the EPSRC grant EP/N510129/1.
TK was partially supported by JSPS KAKENHI Grant Number 15H01678, 15H03636, 16K00044, and 19H04071. Authors would like to thank Dr. Carl Henrik Ek and three anonymous reviewers for their insightful comments.

\bibliography{main}
\bibliographystyle{plainnat}
\newpage
\onecolumn
\appendix
\begin{center}
{\LARGE{}{}{}{}{}{}\ourtitle{}} 
\par\end{center}
	
\begin{center}
\textcolor{black}{\Large{}{}{}{}{}{}Supplementary}{\Large{}{}{}{}{}
} 
\par\end{center}
\section{Examples}
\subsection{Examples of Stein Features $T_{\boldtheta}\boldf(\boldx)$}
\label{sec.stein.ex}
\begin{example}
	\label{ex.fea}
	Let $p = \mathcal{N}(0,1)$, $T_{\mathcal{N}(0,1)} 1 = 0$, then $T_{\mathcal{N}(0,1)} x = -x$ and $T_{\mathcal{N}(0,1)} x^2/2 = -x^2+1$.
\end{example}
As we see, Stein features with respect to $\mathcal{N}(0,1)$ using monomials of $x$ are same-order polynomial terms of $x$ which have been widely used as function basis in various function fitting applications.
\subsection{Assumption \ref{assum.info.hessian} Examples}
\label{sec.exp.ass2}

\begin{example}
	When $f(x) = 0$, by the definition of Stein feature at Section \ref{sec.stein.fea}, $T_{\theta} f(x) \equiv 0$. Our density ratio model does \emph{not} have any discriminative power and become a constant function $1$. 
	We can see
	$
	H_{\delta, \delta} =0,
	H_{\delta,\theta} = 0
	$ regardless what $\delta$ and $\theta$ are chosen.
	Thus, Assumption \ref{assum.info.hessian} is not satisfied here. See \eqref{eq.deri.1} and \eqref{eq.deri.2} in Section \ref{sec.hessian.formula} in Appendix for the exact formulas of $H_{\delta, \delta}$ and $H_{\delta, \theta}$. 
\end{example}

\begin{example}
	When $f(x) := x$ and $p(x;\theta) := \mathcal{N}(\theta,1)$, our density ratio model becomes a linear discriminative function (See Example \ref{ex.fea}).
	From \eqref{eq.deri.1} and \eqref{eq.deri.2}  we can see, when $\theta = \theta^*$ and $\delta = 0$, 
	$
	H^*_{\delta, \delta} = -\frac{1}{n_q} \sum_{i=1}^{n_q} (x_q^{(i)}-\theta^*)^2 
	$
	which is essentially the negative sample variance and $H^*_{\delta, \theta} =   \frac{1}{n_q} \sum_{i=1}^{n_q} \nabla_\theta (x_q^{(i)}-\theta) = -1$. Given $n_q$ is sufficiently large, $\Lambda_\mathrm{min}$ and $\Lambda'_\mathrm{min}$ is reasonably small and $\Lambda_\mathrm{max}$ is reasonably large, Assumption \ref{assum.info.hessian} should hold at the optimal point $(\theta^*, 0)$ with high probability. We omit the analysis when $\delta$ and $\theta$ are slightly deviated from their optimal values due to the page limit. Nonetheless, it can be analysed with some extra regularity conditions.
\end{example}

\subsection{Example of Asymptotic Efficient Choice of $\boldf(\boldx)$}
\label{sec.exp.asym}

\begin{example}
	Consider the univariate Gaussian distribution 
	$p(x;\boldtheta)=\exp\big\{\theta_1x + \theta_2x^2\big\}/Z(\boldtheta)$ for
	$x\in\mathbb{R},\boldtheta=(\theta_1,\theta_2)$, 
	where $\theta_1\in\mathbb{R}, \theta_2<0$, and $Z(\boldtheta)$ is the normalization constant.
	The score function is 
	$
	s_1(x;\boldtheta) =x-\frac{1}{Z(\boldtheta)}\partial_{\theta_1}Z(\boldtheta),
	s_2(x;\boldtheta) =x^2-\frac{1}{Z(\boldtheta)}\partial_{\theta_2} Z(\boldtheta). 
	$
	Let us consider the Stein feature vector for $\boldf(x)=(x,x^2/2)^{\top}$,
	$
	T_{\boldtheta}\boldf(\boldx) 
	=(\theta_1  + 2\theta_2 x,\, 1+\theta_1x  + 2\theta_2 x^2)^{\top}.
	$
	We know that 
	$\theta_1Z(\boldtheta)+2\theta_2\partial_{\theta_1}Z(\boldtheta)=0$ and
	$Z(\boldtheta)+\theta_1\partial_{\theta_1}Z(\boldtheta)+2\theta_2\partial_{\theta_2}Z(\boldtheta)=0$ (see \citep{hayakawa2016} for details). Thus, 
	$\begin{pmatrix}T_{\boldtheta}f_1(\boldx)\\ T_{\boldtheta}f_2(\boldx)\end{pmatrix}
	=
	\begin{pmatrix}2\theta_2 & 0\\ \theta_1 & 2\theta_2   \end{pmatrix}
	\begin{pmatrix}s_1\\ s_2 \end{pmatrix}.$
	The coefficient matrix is invertible as long as $\theta_2 \neq 0$.
	Hence, the DLE with the above $\boldf$ achieves the asymptotic efficiency bound. 
\end{example}	

 \section{Proofs}
For simplicity, we write all $\sum_{i=1}^{n_q} g(\boldx_q^{(i)})$ as $\sum_{i=1}^{n_q} g(\boldx^{(i)})$ from now on as samples always come from dataset $X_q$. 
See Table \ref{tbl.notation} for all defined notations. 

\begin{table}[t]
	\caption{Notations of Symbols }
	\label{tbl.notation}
	\vskip 0.15in
	\begin{center}
		\begin{small}
				\begin{tabular}{ c | >{\centering\arraybackslash} p{10.2cm}  }
					\hline
					Symbol & Definition  \\ \hline
					$\ell(\bolddelta, \boldtheta)$ & $\frac{1}{n_q}\sum_{i=1}^{n_q}\log r_\boldtheta(\boldx_{q}^{(i)};\bolddelta)$, log likelihood ratio \\
					$\nabla \ell(\bolddelta_0, \boldtheta_0)$ & $\nabla_{(\bolddelta, \boldtheta)} \ell(\bolddelta_0, \boldtheta_0)\vert_{\bolddelta=\bolddelta_0, \boldtheta=\boldtheta_0}$ \\
					$\nabla_\bolddelta \ell(\bolddelta_0, \boldtheta_0)$ & $\nabla_\bolddelta \ell(\bolddelta_0, \boldtheta_0)$  \\ \hline
					$\boldH$ & $\nabla_{(\bolddelta, \boldtheta)}^2\ell(\bolddelta,\boldtheta)$, Hessian of likelihood\\
					$\boldH_{\bolddelta, \boldtheta}$ & $\nabla_\bolddelta\nabla_\boldtheta \ell(\bolddelta, \boldtheta)$, submatrix of Hessian. \\   
					\hline
					$\mathrm{Ball}(R, \boldx_0)$ & $\ell_2$ ball with radius $R$ centered at $\boldx_0$\\\hline
					$\|A\|$ & $\ell_2$ norm of a vector $A$ or the \textbf{spectral norm} of a matrix $A$\\\hline
					$\bolds(\boldx;\boldtheta)\in \mathbb{R}^{\mathrm{dim}(\boldtheta)}$ & $\nabla_\boldtheta \log p(\boldx,\boldtheta)$, Score function of $p_\boldtheta$
					\\
					$\bolds$& $s(\boldx;\boldtheta^*)$\\
					\hline
				\end{tabular}
		\end{small}
	\end{center}
	\vskip -0.1in
\end{table}

\subsection{Proof of Lemma \ref{prop.stein.equality}}
\label{sec.proof.stein.eq}
\begin{proof}
	Our proof below is similar to the proof of Lemma 4 in \citep{Hyvaerinen2005}.
	It can be seen that 
	\begin{align*}
	\mathbbE_{p_\boldtheta} [T_{\boldtheta} f_i(\boldx)] &= \int p(\boldx;\boldtheta)\left[\langle\nabla_\boldx \log p(\boldx;\boldtheta),  \nabla_{\boldx} f_i(\boldx)\rangle + \mathrm{trace}(\nabla^2_\boldx f_i(\boldx))\right] \dx\\
	&= \int \langle \nabla_\boldx p(\boldx;\boldtheta), \nabla_\boldx f_i(\boldx)\rangle + p(\boldx; \boldtheta)\cdot\mathrm{trace}(\nabla^2_\boldx f_i(\boldx))\dx.
	\end{align*}
	Let us rewrite  $\mathbbE_{p_\boldtheta} [T_{\boldtheta} f_i(\boldx)]$ as nested integrals over each component of $\boldx$: 
	\begin{align}
	&\mathbbE_{p_\boldtheta} [T_{\boldtheta} f_i(\boldx)] \notag\\
	\label{eq.stein.iden.1}
	=& \sum_{j=1}^{d}\int_{\boldx_{\setminus j}} \int_{x_j} \partial_{x_j} f_i(\boldx) \cdot \partial_{x_j} p(\boldx;\boldtheta)  + p(\boldx;\boldtheta)\cdot\partial^2_{x_j} f_i(\boldx) \mathrm{d} x_j \dx_{\setminus j},\\
	\label{eq.stein.iden.2}
	=& \sum_{j=1}^{d} \int_{\boldx_{\setminus j}}
	\underbrace{\left[ p(\boldx;\boldtheta)\partial_{x_j}f_i(\boldx)
		\right]_{x_j \to -\infty}^{{x_j \to +\infty}}}_{0, \text{by assumption}}  \dx_{\setminus j}
	- \int_{\boldx_{\setminus j}}\int_{x_j} p(\boldx;\boldtheta)\left[
	\partial^2_{x_j} f_i(\boldx) - 
	\partial^2_{x_j} f_i(\boldx) \right] \mathrm{d} x_j \dx_{\setminus j} \\
	=& 0.
	\end{align}
	where $\boldx_{\setminus j}$ contains all the components in $\boldx$ except the $j$-th component. 
	The equality from \eqref{eq.stein.iden.1} to \eqref{eq.stein.iden.2} is due to one dimensional integration by parts formula. The first term in \eqref{eq.stein.iden.2} is zero as the product of $p(\boldx)$ and $\partial_{x_j}f_i(\boldx)$ is asssumed to be zero when $x_j$ takes the limit to $+/-\infty$.
	Our assumption holds for all $i,j$, so we can assert $\forall_{i} \mathbbE_{p_\boldtheta} [T_{\boldtheta} f_i(\boldx)] = 0$ and $\mathbbE_{p_\boldtheta} [T_{\boldtheta} \boldf(\boldx)]=\boldzero$ by its construction.
\end{proof}

\subsection{Derivations of $\nabla^2_\delta\ell(\delta, \boldtheta)$ and $\nabla_{\delta,\boldtheta}\ell(\delta, \boldtheta)$ with $f(\boldx): \mathbb{R}^{d} \to \mathbb{R}$}
\label{sec.hessian.formula}
\begin{align}
\nabla^2_\delta\ell(\delta, \boldtheta) &= -\frac{1}{n_q} \sum_{i=1}^{n_q} \frac{\left[T_{\boldtheta} f(\boldx^{(i)})\right]^2}{r^2_\boldtheta(\boldx^{(i)};\delta)} + 0,\label{eq.deri.1}\\
\nabla_{\delta,\boldtheta}\ell(\delta, \boldtheta) &= -\frac{1}{n_q} \sum_{i=1}^{n_q} \frac{T_{\boldtheta} f(\boldx^{(i)})}{r^2_\boldtheta(\boldx^{(i)};\delta)} \nabla_\boldtheta r_\boldtheta(\boldx^{(i)};\delta) + \frac{1}{n_q} \sum_{i=1}^{n_q} \frac{1}{r_\boldtheta(\boldx^{(i)};\delta)} \nabla_\boldtheta T_{\boldtheta} f(\boldx^{(i)}).\label{eq.deri.2}
\end{align}

\subsection{Proof of Proposition \ref{prop.ass2.expo}}
\label{sec.proof.ass2.expo}
\begin{proof}
	First, 
	the definition of $\Delta_{n_q}$ gives the boundedness of our ratio, i.e., $\frac{1}{C_\mathrm{ratio}}\le r_\boldtheta(\boldx;\bolddelta)\le C_\mathrm{ratio}, \forall \boldx \in X_q, \forall \boldtheta \in \Theta$.
	
	Second, $-\boldH_{\bolddelta,\bolddelta} = \frac{1}{n_q}\sum_{i=1}^{n_q} \frac{1}{r^2_\boldtheta\left(\boldx^{(i)};\bolddelta\right)} \cdot 
	{T\boldpsi^{(i)}} 
	{T\boldpsi^{(i)}}^\top$, where $T\boldpsi^{(i)}$ is an abbreviation of $T_{\boldtheta}\boldpsi(\boldx^{(i)})$. It is a sum over ratio weighted positive semi-definite matrices so we can lower bound its minimum eigenvalue using the lower bound of the ratio: 
	\[
	\lambda_\mathrm{min}(-\boldH_{\bolddelta, \bolddelta}) \ge \frac{1}{C^2_\mathrm{ratio}} \lambda_{\mathrm{min}} \left( \frac{1}{n_q} \sum_{i=1}^{n_q} 
	T\boldpsi^{(i)} {T\boldpsi^{(i)}}^\top\right) > \frac{\Lambda''_{\text{min}}}{C^2_\mathrm{ratio}} > 0, 
	\text{with high prob.,}
	\] due to our assumption. Similarly, we can also upper-bound its maximum eigenvalue
	\[
		\lambda_\mathrm{max}(-\boldH_{\bolddelta, \bolddelta}) \le C^2_\mathrm{ratio} \lambda_{\mathrm{max}} \left( \frac{1}{n_q} \sum_{i=1}^{n_q} 
		T\boldpsi^{(i)} {T\boldpsi^{(i)}}^\top\right) \le C^2_\mathrm{ratio} \Lambda''_{\text{max}}, 
		\text{with high prob.,}
	\]
	
	Third, $-\boldH_{\boldtheta,\boldtheta} = \frac{1}{n_q}\sum_{i=1}^{n_q} \frac{1}{r^2_\boldtheta\left(\boldx^{(i)};\bolddelta\right)} \boldJ_\boldx \boldpsi(\boldx^{(i)}) \boldJ_\boldx\boldpsi(\boldx^{(i)})^\top \bolddelta \bolddelta^\top \boldJ_\boldx \boldpsi(\boldx^{(i)})^\top \boldJ_\boldx\boldpsi(\boldx^{(i)})$. We can see \[\|\boldH_{\boldtheta,\boldtheta}\|
	\le
	\frac{C^2_\mathrm{ratio}\cdot \|\bolddelta\|^2}{n_q} \sum_{i=1}^{n_q} \|\boldJ_\boldx \boldpsi(\boldx^{(i)})\|^4  \le C^2_{\mathrm{ratio}}C_2 \cdot \|\bolddelta\|^2\le \frac{C^2_{\mathrm{ratio}}C_2 T}{\sigma(n_q)^2}.\]
	
	Fourth, using the fact that $-\boldH_{\bolddelta,\bolddelta}$ is a positive definite matrix, which we have just proved, we can see
	\begin{align*}
	\lambda_\mathrm{min}\left\{-\boldH_{\boldtheta, \bolddelta}\boldH_{\bolddelta,\bolddelta}^{-1}\boldH_{\bolddelta,\boldtheta}\right\} &= \lambda_\mathrm{min}( -\boldH_{\bolddelta,\bolddelta}^{-1}
	\boldH_{\bolddelta, \boldtheta}\boldH_{\boldtheta, \bolddelta}) \\
	&\ge \lambda_\mathrm{min}( -\boldH_{\bolddelta,\bolddelta}^{-1})
	\lambda_{\mathrm{min}}(	\boldH_{\bolddelta, \boldtheta}\boldH_{\boldtheta, \bolddelta})\\
    &=
	\frac{\lambda_\mathrm{min}(		\boldH_{\bolddelta, \boldtheta}\boldH_{\boldtheta, \bolddelta})}{\lambda_\mathrm{max}(-\boldH_{\bolddelta,\bolddelta})}
	\ge \frac{\lambda_\mathrm{min}(		\boldH_{\bolddelta, \boldtheta}\boldH_{\boldtheta, \bolddelta})}{C^2_\mathrm{ratio}\Lambda''_\mathrm{max}},
	\end{align*} 
	where 2nd line is due to Theorem 7, \citep{Merikoski2004}.
	So we only need to find a lower bound for $\lambda_\mathrm{min}(	\boldH_{\bolddelta, \boldtheta}\boldH_{\boldtheta, \bolddelta})$. We can write $\boldH_{\boldtheta, \bolddelta}$ as 
	\begin{align}
	\label{eq.Hthetadelta}
	\boldH_{\boldtheta, \bolddelta} = &\underbrace{\frac{1}{n_q} \sum_{i=1}^{n_q}\frac{1}{r_\boldtheta(\boldx^{(i)};\bolddelta)} \boldJ_\boldx \boldpsi(\boldx^{(i)}) \boldJ_\boldx \boldpsi(\boldx^{(i)})^\top}_{\boldA} \\
	-&\underbrace{\frac{1}{n_q} \sum_{i=1}^{n_q}\frac{1}{r^2_\boldtheta(\boldx^{(i)};\bolddelta)} \boldJ_\boldx \boldpsi(\boldx^{(i)}) \boldJ_\boldx \boldpsi(\boldx^{(i)})^\top 
	\bolddelta T_{\boldtheta} \boldpsi(\boldx^{(i)}) ^\top }_{\boldB}
	\end{align}
	Therefore $\boldH_{\bolddelta, \boldtheta}\boldH_{\boldtheta, \bolddelta}$ can be written as 
	\begin{align*}
		\boldA\boldA^\top - \boldA\boldB^\top - \boldB\boldA^\top + \boldB\boldB^\top.
	\end{align*}
	Since we are analyzing the minimum eigenvalue, we can safely ignore the last term $\boldB\boldB^\top$ as it is positive semi-definite. This gives the following inequality:
	\begin{align*}
		\lambda_{\mathrm{min}}\left\{\boldA\boldA^\top - \boldA\boldB^\top - \boldB\boldA^\top\right\} &\ge \lambda_{\mathrm{min}}\left\{\boldA\boldA^\top\right\} + \lambda_{\mathrm{min}}\left\{- \boldA\boldB^\top - \boldB\boldA^\top\right\}\\
		&\ge 
		\lambda_{\mathrm{min}}\left\{\boldA\boldA^\top\right\} - \|\boldA\boldB^\top + \boldB\boldA^\top\|
	\end{align*}
	As $\boldA$ is a sum of ratio weighted positive semi-definite matrices, we can use the same trick in the second step to lower bound its eigenvalue using the lower bound of the density ratio, eventually, using our assumption on $\lambda_{\mathrm{min}}\{\frac{1}{n_q}\sum_{i=1}^{n_q}\boldJ^{(i)}{\boldJ^{(i)}}^\top\}\ge C_3$, we can get,
	\begin{align*}
		\lambda_{\mathrm{min}}(\boldA) \ge \frac{C_3}{C_\mathrm{ratio}},
		\lambda_{\mathrm{min}}(\boldA\boldA^\top) \ge \lambda_{\mathrm{min}}(\boldA) \cdot \lambda_{\mathrm{min}}(\boldA) \ge  \frac{C_3^2}{C^2_\mathrm{ratio}}. 
	\end{align*}
	
	Now we analyze $\|\boldA\boldB^\top + \boldB\boldA^\top\|$ which is further upperbounded by $2\|\boldA\|\|\boldB\|$.
	
	Similarly to how $\lambda_{\mathrm{min}}(\boldA)$ is bounded, we can upper-bound $\|\boldA\|$  using the upperbound of the ratio:  $\|\boldA\| \le C_\mathrm{ratio}C_4$.
	Let us write 
	$\boldB = \frac{1}{n_q}\sum_{i=1}^{n_q} \frac{1}{r^2_i} \boldJ^{(i)} {\boldJ^{(i)}}^\top \bolddelta {T \boldpsi^{(i)}}^\top$
	where $\boldJ^{(i)}$ and $r_i$ are abbreviations of $\boldJ_\boldx \boldpsi(\boldx^{(i)})$ and $r_\boldtheta(\boldx^{(i)};\bolddelta)$. It can be seen that
	\begin{align*}
		\|\boldB\| \le \frac{1}{n_q}\sum_{i=1}^{n_q} \|\frac{1}{r^2_i} \boldJ^{(i)} {\boldJ^{(i)}}^\top \|\cdot \|\bolddelta T \boldpsi^{(i)}\| &\le	\frac{1}{n_q}\sum_{i=1}^{n_q} \|\frac{1}{r^2_i} \boldJ^{(i)} {\boldJ^{(i)}}^\top \|\cdot \|\bolddelta \|\cdot \|T \boldpsi^{(i)}\|,\\
		&\le C^2_\mathrm{ratio} \cdot C_5 T/\sigma(n_q).
	\end{align*}
	Now we can bound
	\begin{align*}
		\lambda_{\mathrm{min}}\left\{\boldA\boldA^\top - \boldA\boldB^\top - \boldB\boldA^\top + \boldB\boldB^\top\right\} & \ge 
		\lambda_{\mathrm{min}}\left\{\boldA\boldA^\top\right\} - 2\|\boldA\|\|\boldB\| \\ 
		&\ge \frac{C^2_3}{C^2_\mathrm{ratio}} - C^3_\mathrm{ratio} C_4 \cdot  C_5 T/\sigma(n_q)
	\end{align*}
	
	There exists a constant $N>0$, such that when $n_q>N$, 
	\begin{align*}
		\lambda_\mathrm{min}\left\{-\boldH_{\boldtheta, \bolddelta}\boldH_{\bolddelta,\bolddelta}^{-1}\boldH_{\bolddelta,\boldtheta}\right\}
		\ge \frac{\lambda_{\mathrm{min}} (\boldH_{\bolddelta, \boldtheta}\boldH_{\boldtheta, \bolddelta})}{C^2_\mathrm{ratio}\Lambda''_\mathrm{max}} 
		\ge& \frac{C^2_3}{C^4_\mathrm{ratio}\Lambda'_\mathrm{max}}  -  \frac{C_\mathrm{ratio}C_4 \cdot  C_5 T}{\sigma(n_q)\Lambda''_\mathrm{max}} \\
		\ge& \frac{C^2_\mathrm{ratio} C_2 T}{\sigma(n_q)^2}\ge \|\boldH_{\boldtheta,\boldtheta} \|.
	\end{align*}
	
	Finally we analyze $\left\|\boldH_{\boldtheta,\bolddelta}\boldH_{\bolddelta,\bolddelta}^{-1}\right\|$. $\left\|\boldH_{\boldtheta,\bolddelta}\boldH_{\bolddelta,\bolddelta}^{-1}\right\| \le \left\|\boldH_{\boldtheta,\bolddelta}\right\|\cdot\left\|\boldH_{\bolddelta,\bolddelta}^{-1}\right\|$. As $-\boldH_{\bolddelta,\bolddelta}$ is positive definite, the operator norm of its inverse is the inverse of its minimum eigenvalue, which is upperbounded by $C^2_{\mathrm{ratio}}/\Lambda''_\mathrm{min}$. On the other hand, we can rewrite \eqref{eq.Hthetadelta} as 
	$\boldH_{\boldtheta, \bolddelta}  = \frac{1}{n_q} \sum_{i=1}^{n_q} \frac{1}{r_i} \cdot  \boldJ^{(i)}{\boldJ^{(i)}} ^\top \cdot  \left( \mathrm{Iden} - \frac{1}{r_i}\cdot \bolddelta T {\boldpsi^{(i)}}^\top\right)$, so
	\begin{align*}
	\left\| \boldH_{\boldtheta, \bolddelta} \right\| &\le \frac{1}{n_q} \sum_{i=1}^{n_q} \frac{1}{r_i} \cdot \left\|  \boldJ^{(i)}{\boldJ^{(i)}} ^\top \cdot  \left( \mathrm{Iden} - \frac{1}{r_i}\cdot \bolddelta T {\boldpsi^{(i)}}^\top\right)\right\|\\
	&\le \frac{C_\mathrm{ratio}}{n_q} \sum_{i=1}^{n_q} \|  \boldJ^{(i)}{\boldJ^{(i)}}^\top \| \cdot
	\underbrace{\|\mathrm{Iden} - \frac{1}{r_i}\cdot \bolddelta T {\boldpsi^{(i)}}^\top\|}_{\boldC}
	\end{align*}
	From calculation, we know $\|\boldC\| \le 1 + |(r_i-1)/r_i|\le 2+C_\mathrm{ratio}$. Therefore 
	$\left\| \boldH_{\boldtheta, \bolddelta} \right\| \le  \frac{C^2_\mathrm{ratio} + 2C_\mathrm{ratio}}{n_q} \sum_{i=1}^{n_q}   \|\boldJ^{(i)}{\boldJ^{(i)}} ^\top\| \le (C^2_\mathrm{ratio} + 2C_\mathrm{ratio})C_4$. Therefore $\left\|\boldH_{\boldtheta,\bolddelta}\boldH_{\bolddelta,\bolddelta}^{-1}\right\|$ is upperbounded by $(C^4_\mathrm{ratio} + 2C^3_\mathrm{ratio})C_4/\Lambda''_\mathrm{min}$.
	
	Refer to \citep{Merikoski2004,Jiu2007} for inequalities of eigenvalue of matrix summation and product.
\end{proof}

\subsection{Proof of Theorem \ref{thm.main}}
\label{sec.main.proof}
\begin{proof}
	We denote Hessian $\boldH$ as a block matrix: 
	\[
	\boldH = \nabla^2\ell(\bolddelta, \boldtheta) = \begin{pmatrix}
	\boldH_{11} & \boldH_{12}\\\boldH_{21} & \boldH_{22}
	\end{pmatrix} = \begin{pmatrix}
	\nabla^2_\bolddelta\ell(\bolddelta, \boldtheta) & \nabla_\bolddelta\nabla_\boldtheta\ell(\bolddelta, \boldtheta)\\\nabla_\boldtheta\nabla_\bolddelta\ell(\bolddelta, \boldtheta) & \nabla^2_\boldtheta\ell(\bolddelta, \boldtheta)
	\end{pmatrix},
	\]
	then Assumption \ref{assum.info.hessian} states that for every $\bolddelta \in \Delta_{n_q}$ and $\boldtheta \in \Theta$, $\lambda(\boldH_{21}\boldH_{11}^{-1}\boldH_{12})$ is lower bounded by $2\left\|\boldH_{22}\right\|$ and $\left\|\boldH_{21}\boldH_{11}^{-1}\right\|$ is upper bounded. 
	
	We can write the optimality condition of \eqref{eq.thm.obj} and expand them  at $(\bolddelta^*\equiv\boldzero, \boldtheta^*)$:
	\begin{align}
	\nabla_\bolddelta \ell(\hat{\bolddelta},\hat{\boldtheta}) &= \boldzero = \nabla_\bolddelta \ell(\bolddelta^*,\boldtheta^*) + \bar{\boldH}_{11} (\hat{\bolddelta} - \bolddelta^*)
	+ \bar{\boldH}_{12} (\hat{\boldtheta} - \boldtheta^*)\label{eq.opt.1}\\
	\nabla_\boldtheta \ell(\hat{\bolddelta},\hat{\boldtheta}) &= \boldzero = \nabla_\boldtheta \ell(\bolddelta^*,\boldtheta^*) + \bar{\boldH}_{21} (\hat{\bolddelta} - \bolddelta^*)
	+ \bar{\boldH}_{22} (\hat{\boldtheta} - \boldtheta^*)\label{eq.opt.2},
	\end{align}
	where $\bar{\boldH}$ is the Hessian evaluated at a  $(\bar{\bolddelta},\bar{\boldtheta})$ which is in between $(\hat{\bolddelta},\hat{\boldtheta})$ and $(\bolddelta^*, \boldtheta^*)$ in \emph{an element-wise fashion}. This expansion is basically one-dimensional mean-value theorem applied on \emph{each individual dimension} of $\nabla_\bolddelta \ell(\hat{\bolddelta},\hat{\boldtheta})$ and $\nabla_\boldtheta \ell(\hat{\bolddelta},\hat{\boldtheta})$.
	
	Given \eqref{eq.opt.1} and \eqref{eq.opt.2} we can solve  equations for $\hat{\bolddelta} - \bolddelta^*$ and $\hat{\boldtheta} - \boldtheta^*$.
	
	From \eqref{eq.opt.1} we can get 
	\begin{align}
	\hat{\bolddelta} - \bolddelta^* = \bar{\boldH}_{11}^{-1}\left[-\nabla_\bolddelta \ell(\bolddelta^*,\boldtheta^*)
	- \bar{\boldH}_{12} (\hat{\boldtheta} - \boldtheta^*)\right]\label{eq.opt.3}.
	\end{align}
	Substituting \eqref{eq.opt.3} into \eqref{eq.opt.2} we get 
	\begin{align*}
	\boldzero = \nabla_\boldtheta \ell(\bolddelta^*,\boldtheta^*) - \bar{\boldH}_{21}\bar{\boldH}_{11}^{-1} \nabla_\bolddelta \ell(\bolddelta^*,\boldtheta^*)  +\left[-\bar{\boldH}_{21}\bar{\boldH}_{11}^{-1}\bar{\boldH}_{12} + \bar{\boldH}_{22}\right]\left(\hat{\boldtheta} - \boldtheta^* \right).
	\end{align*}
	Rearranging terms, we get 
	\begin{align}
	\hat{\boldtheta} - \boldtheta^* =& \left[\bar{\boldH}_{21}\bar{\boldH}_{11}^{-1}\bar{\boldH}_{12} - \bar{\boldH}_{22}\right]^{-1} \left(\nabla_\boldtheta \ell(\bolddelta^*,\boldtheta^*) - \bar{\boldH}_{21}\bar{\boldH}_{11}^{-1} \nabla_\bolddelta \ell(\bolddelta^*,\boldtheta^*) \right)\label{eq.thetadiff.0}\\
	=&\left[-\bar{\boldH}_{21}\bar{\boldH}_{11}^{-1}\bar{\boldH}_{12} + \bar{\boldH}_{22}\right]^{-1} \bar{\boldH}_{21}\bar{\boldH}_{11}^{-1} \nabla_\bolddelta \ell(\bolddelta^*,\boldtheta^*).\label{eq.thetadiff.rep}
	\end{align}
	The last line uses the fact that $\nabla_\boldtheta \ell(\bolddelta^*,\boldtheta^*) \equiv \boldzero$.
	
	Weyl's inequality states:
	\begin{align*}
	\lambda_\mathrm{min}(A+B) \ge \lambda_\mathrm{min}(A) + \lambda_\mathrm{min}(B).
	\end{align*}
	As $\bar{\bolddelta} \in \Delta_{n_q}$ and $\bar{\boldtheta} \in \Theta$,  $\bar{\boldH}$ is regulated by Assumption \ref{assum.info.hessian}. 
	Since \[\lambda_\mathrm{min}(-\bar{\boldH}_{21}\bar{\boldH}_{11}^{-1}\bar{\boldH}_{12}) \ge \Lambda_\mathrm{min}\] and \[\lambda_\mathrm{min}(\bar{\boldH}_{22}) \ge - \|\bar{\boldH}_{22}\| \ge -\frac{\Lambda_\mathrm{min}}{2}\] which are assumed by Assumption \ref{assum.info.hessian}, we have \[\lambda_\mathrm{min}(-\bar{\boldH}_{21}\bar{\boldH}_{11}^{-1}\bar{\boldH}_{12} + \bar{\boldH}_{22})\ge\Lambda_\mathrm{min}/2>0.\]
	
	Denote $-\bar{\boldH}_{21}\bar{\boldH}_{11}^{-1}\bar{\boldH}_{12} + \bar{\boldH}_{22}$ as $\bar{\boldH}/\bar{\boldH}_{22}$ (it is actually the Schur Complement of $\bar{\boldH}$). 
	Using Holder's inequality, we get 
	\begin{align}
	\|\hat{\boldtheta} - \boldtheta^*\| \le&  \left\|\left[\bar{\boldH}/\bar{\boldH}_{22}\right]^{-1}\right\| \left\|\bar{\boldH}_{21}\bar{\boldH}_{11}^{-1}\right\| \left\|\nabla_\bolddelta \ell(\bolddelta^*,\boldtheta^*) \right\| \notag\\
	\le& \frac{\left\|\bar{\boldH}_{21}\bar{\boldH}_{11}^{-1}\right\|}{\lambda_\mathrm{min}\left[\bar{\boldH}/\bar{\boldH}_{22}\right]} \cdot \left\|\nabla_\bolddelta \ell(\bolddelta^*,\boldtheta^*) \right\|
	\le \frac{2\Lambda_\mathrm{max}}{\Lambda_\mathrm{min}} \cdot \left\|\nabla_\bolddelta \ell(\bolddelta^*,\boldtheta^*) \right\|. \label{eq.consis.final}
	\end{align}
	
	Further, we have $\mathbbE_{q}\left[T_{\boldtheta^*} \boldf(\boldx)\right]= \mathbbE_{p_{\boldtheta^*}}\left[T_{\boldtheta^*} \boldf(\boldx)\right] =\boldzero$. The first equality is due to Assumption \ref{eq.model} and the second equality is given by Stein identity. 
	
	Therefore, $\nabla_\bolddelta \ell(\bolddelta^*,\boldtheta^*)  = \frac{1}{n_q}\sum_{i=1}^{n_q} T_{\boldtheta^*} f(\boldx^{(i)}) - \boldzero = \frac{1}{n_q}\sum_{i=1}^{n_q} T_{\boldtheta^*} f(\boldx^{(i)}) - \mathbbE_{q}\left[T_{\boldtheta^*} \boldf(\boldx)\right] $, which converges to $0$ in $\ell_2$ norm in probability due to Assumption \ref{ass.concentration}. This gives the convergence in probability of $\|\hat{\boldtheta} - \boldtheta^*\|$.
    Finite sample convergence rate can be given if the convergence rate of $\left\|\nabla_\bolddelta \ell(\bolddelta^*,\boldtheta^*)\right\|$ is known.

	Now we show the consistency of $\hat{\bolddelta}$. From \eqref{eq.opt.3} we can see that 
	\begin{align*}
	\hat{\bolddelta} - \bolddelta^* = -\bar{\boldH}_{11}^{-1}\nabla_\bolddelta \ell(\bolddelta^*,\boldtheta^*)
	- \bar{\boldH}_{11}^{-1}\bar{\boldH}_{12} (\hat{\boldtheta} - \boldtheta^*),
	\end{align*}
	and due to Holder's inequality, we get 
	\begin{align}
	\left\|\hat{\bolddelta} - \bolddelta^* \right\| &= \left\|-\bar{\boldH}_{11}^{-1}\right\|\left\|\nabla_\bolddelta \ell(\bolddelta^*,\boldtheta^*)\right\|
	+ \left\|\bar{\boldH}_{11}^{-1}\bar{\boldH}_{12}\right\| \left\|\hat{\boldtheta} - \boldtheta^*\right\|\notag\\
	&\le \frac{1}{\Lambda'_\mathrm{min}}\left\|\nabla_\bolddelta \ell(\bolddelta^*,\boldtheta^*)\right\| + \Lambda_\mathrm{max} \left\|\hat{\boldtheta} - \boldtheta^*\right\|. \label{eq.delta.diff.2}
	\end{align}
	Combine \eqref{eq.delta.diff.2} with \eqref{eq.consis.final} we get
	\begin{align*}
	\left\|\hat{\bolddelta} - \bolddelta^* \right\|\le   \frac{2\Lambda^2_\mathrm{max}\Lambda'_\mathrm{min}+\Lambda_\mathrm{min}}{\Lambda_\mathrm{min}\Lambda'_\mathrm{min}} \cdot \left\|\nabla_\bolddelta \ell(\bolddelta^*,\boldtheta^*) \right\|
	\end{align*}
	
    Again, due to Assumption \ref{ass.concentration}, $\left\|\nabla_\bolddelta \ell(\bolddelta^*,\boldtheta^*) \right\| \overset{\mathbb{P}}{\to} \boldzero$. This completes the proof.
	
\end{proof}
\subsection{Proof of Theorem \ref{thm.normality2}}
\label{sec.norm.proof}
\begin{proof}
	 Due to Assumption \ref{assum.unifor.conv}, it can be seen that $\bar{\boldH} \overset{\mathbb{P}}{\to} \mathbb{E}_q\left[\bar{\boldH}\right]$. Moreover, as $\bar{\boldtheta} \overset{\mathbb{P}}{\to} \boldtheta^*$ and $\bar{\bolddelta} \overset{\mathbb{P}}{\to} \boldzero$ (proved in Theorem \ref{thm.main}), we can see $\mathbb{E}_q\left[\bar{\boldH}\right] \overset{\mathbb{P}}{\to} \mathbb{E}_q\left[\boldH^*\right]$ due to continuous mapping. 
	 Thus $\bar{\boldH} = \mathbb{E}_q\left[\boldH^*\right] + o_p(1)$.
	 From now on, for simplicity, let us denote $-\mathbb{E}_q\left[\boldH^*\right]$ as $\boldI$ \footnote{$\boldI$ for ``information matrix''. Do not confuse with the identify matrix which is denoted as $\mathrm{Iden}$ in this paper}. 
	 
     We again write the optimality condition of \eqref{eq.thm.obj} and apply asymptotic expansion at $(\bolddelta^*\equiv\boldzero, \boldtheta^*)$:
	 \begin{align}
	 \nabla_\bolddelta \ell(\hat{\bolddelta},\hat{\boldtheta}) &= \boldzero = \nabla_\bolddelta \ell(\bolddelta^*,\boldtheta^*) + (-\boldI_{11} + o_p(1)) (\hat{\bolddelta} - \bolddelta^*)
	 + (-\boldI_{12} + o_p(1)) (\hat{\boldtheta} - \boldtheta^*)\label{eq.opt.5}\\
	 \nabla_\boldtheta \ell(\hat{\bolddelta},\hat{\boldtheta}) &= \boldzero = \nabla_\boldtheta \ell(\bolddelta^*,\boldtheta^*) + (-\boldI_{21} + o_p(1)) (\hat{\bolddelta} - \bolddelta^*)
	 + (-\boldI_{22} + o_p(1)) (\hat{\boldtheta} - \boldtheta^*)\label{eq.opt.6}.
	 \end{align}
	Note we have replaced all $\bar{\boldH}$ with $-\boldI + o_p(1)$, and $o_p(1)$ will be ignored in future algebraic calculations.
	
	We now get an asymptotic version of \eqref{eq.thetadiff.rep}:
	\begin{align*}
	\sqrt{n_q}\left(\hat{\boldtheta} - \boldtheta^*\right) &\rightsquigarrow -\left(\boldI_{21} \boldI_{11}^{-1}\boldI_{12} - \boldI_{22}\right)^{-1}\boldI_{21} \boldI_{11}^{-1} \nabla_\bolddelta \ell(\bolddelta^*,\boldtheta^*)\cdot \sqrt{n_q} \\
	&= -\left(\boldI_{21} \boldI_{11}^{-1}\boldI_{12} \right)^{-1}\boldI_{21} \boldI_{11}^{-1} \nabla_\bolddelta \ell(\bolddelta^*,\boldtheta^*)\cdot \sqrt{n_q}
	\end{align*}
	The last equality is due to $\boldI_{22} \equiv \boldzero$.

	Noticing that $\boldI_{11}^{-1}\nabla_\bolddelta \ell(\bolddelta^*,\boldtheta^*)\cdot \sqrt{n_q}$ is a sum of independent random variables with zero mean and covariance $-\boldI_{11}^{-1}$.  Applying CLT on $\boldI_{11}^{-1} \nabla_\bolddelta \ell(\bolddelta^*,\boldtheta^*)\cdot \sqrt{n_q}$ yields
	\begin{align*}
	\boldI_{11}^{-1} \nabla_\bolddelta \ell(\bolddelta^*,\boldtheta^*) \rightsquigarrow 
	\mathcal{N}\left(\boldzero, -\boldI^{-1}_{11}\right),
	\end{align*}
	thus
	\begin{align*}
		\sqrt{n_q}\left(\hat{\boldtheta} - \boldtheta^*\right) \rightsquigarrow \mathcal{N}\left[\boldzero, \left(-\boldI_{21}\boldI_{11}^{-1}\boldI_{12}\right)^{-1}\right].
	\end{align*}
\end{proof}

\subsection{Proof of Lemma \ref{cor.mono}}
\label{cor.mono.proof}
\begin{proof}
	Let us shorten the Stein feature vector					 $T_{\boldtheta} \boldf(\boldx)$ as $\boldt(\boldx;\boldtheta) \in \mathbb{R}^{b}$ and $\boldt$ as $\boldt(\boldx;\boldtheta^*)$.
	We start by computing each factors in the variance.
	Since $r_\boldtheta(\boldx;\bolddelta^*)=1$ holds for all $\boldtheta$, we have
	$\nabla_\boldtheta r_\boldtheta(\boldx;\bolddelta^*)=\boldzero$. 
	Then, we have 
	\begin{align*}
	-\mathbb{E}_q\left[\boldH^*_{\bolddelta,\bolddelta}\right]
	&=
	-\mathbb{E}_q\left[\nabla^2_{\bolddelta}\log{r_{\boldtheta^*}(\boldx;\bolddelta^*)}\right]\\
	&=
	\mathbb{E}_q\left[\frac{1}{r(\boldx;\bolddelta^*,\boldtheta^*)^2}\boldt \boldt^{\top}\right]=  \mathbb{E}_q[\boldt \boldt^{\top}]\in\mathbb{R}^{b\times b},\\
	\mathbb{E}_q[\boldH^*_{\boldtheta,\bolddelta}]
	&=
	\mathbb{E}_q\left[\frac{1}{r}{\nabla_\boldtheta}\boldt(\boldx;\boldtheta^*)^{\top}
	-\frac{1}{r^2}\nabla_\boldtheta r_{\boldtheta^*}(\boldx;\bolddelta^*) \boldt(\boldx;\boldtheta^*)^{\top}\right]\\
	&=
	\mathbb{E}_q\left[{\nabla_\boldtheta}\boldt(\boldx;\boldtheta^*)^{\top}\right]\in\mathbb{R}^{\mathrm{dim}(\boldtheta) \times b}. 
	\end{align*}
	Since the equality $\mathbb{E}_{p_\boldtheta}[\boldt(\boldx;\boldtheta)]= \boldzero$ holds for all $\boldtheta$, we have
	${\nabla_ \boldtheta}\mathbb{E}_{p_\boldtheta}[\boldt(\boldx;\boldtheta)]=\boldzero$. Exchangeability of the integration and the derivative yields 
	\begin{align*}
    {\nabla_ \boldtheta}\mathbb{E}_{p_\boldtheta}[\boldt(\boldx;\boldtheta)]
	=\mathbb{E}_{p_\boldtheta}\left[\bolds(\boldx; \boldtheta) \boldt(\boldx;\boldtheta)^{\top}\right]+
	\mathbb{E}_{p_\boldtheta}\left[\nabla_ \boldtheta\boldt(\boldx;\boldtheta)^{\top}\right]=\boldzero. 
	\end{align*}
	As a result, we obtain
	\[
	\mathbb{E}_q[\boldH^*_{\boldtheta,\bolddelta}]= 
	-\mathbb{E}_{q}[\bolds\boldt^{\top}].
	\] 
\end{proof}

\subsection{Proof of Theorem \ref{thm.aic.2}}
\label{sec.aic2.proof}
\begin{proof}
	Use Taylor series to expand $\mathbb{E}_q\left[\ell(\hat{\bolddelta}, \hat{\boldtheta})\right]$ on $(\boldtheta^*, \bolddelta^*)$, we get
	\begin{align}
	\mathbb{E}_q \left[\ell(\hat{\bolddelta},\hat{\boldtheta})\right] = & \mathbb{E}_q \left[\ell(\bolddelta^*,\boldtheta^*)\right] + \nabla_\bolddelta \mathbb{E}_q \left[\ell(\bolddelta^*,\boldtheta^*)\right]^\top\left[\hat{\bolddelta}-\bolddelta^*\right] 
	+\nabla_\boldtheta \mathbb{E}_q \left[\ell(\bolddelta^*,\boldtheta^*)\right]^\top\left[\hat{\boldtheta}-\boldtheta^*\right] \notag\\
	+& \frac{1}{2} \left[\hat{\boldeta}-\boldeta^*\right]^\top \nabla^2_\boldeta \mathbb{E}_q \left[\ell(\bar{\boldeta})\right]\left[\hat{\boldeta}-\boldeta^*\right]\notag\\
	=& 0 + 0  + 0 + \frac{1}{2} \left[\hat{\boldeta}-\boldeta^*\right]^\top \nabla^2_\boldeta \mathbb{E}_q \left[\ell(\bar{\boldeta})\right]\left[\hat{\boldeta}-\boldeta^*\right] \label{eq.aic.11}
	\end{align}
	where we denote $\boldeta := \begin{bmatrix}
	\bolddelta\\
	\boldtheta
	\end{bmatrix}$ for short and $\bar{\boldeta}$ is defined in between $\hat{\boldeta}$ and $\boldeta^*$ in an element-wise fashion. The second equality is due to $\bolddelta^* = \boldzero$ and $ \mathbb{E}_q \left[\nabla_\bolddelta\ell(\bolddelta^*,\boldtheta^*)\right] = \boldzero$, which is given by Stein identity.
	Similarly we can expand 
	\begin{align}
	\ell(\hat{\bolddelta},\hat{\boldtheta}) = &  \nabla_\bolddelta  \ell(\bolddelta^*,\boldtheta^*)^\top\left[\hat{\bolddelta}-\bolddelta^*\right] 
	+ \frac{1}{2} \left[\hat{\boldeta}-\boldeta^*\right]^\top \nabla^2_\boldeta \ell(\bar{\bar{\boldeta}})\left[\hat{\boldeta}-\boldeta^*\right], \label{eq.aic.22}
	\end{align}
	where $\bar{\bar{\boldeta}}$ is similarly defined as $\bar{\boldeta}$.
	It can be seen that $\nabla^2_\boldeta \ell(\bar{\bar{\boldeta}}) \overset{\mathbb{P}}{\to} 
	-\boldI$ and $\nabla^2_\boldeta\mathbb{E}_q \left[\ell\bar{\boldeta}\right] \overset{\mathbb{P}}{\to} 
	-\boldI$ due to Assumption \ref{assum.unifor.conv} and our consistency results. Taking the difference between \eqref{eq.aic.11} and \eqref{eq.aic.22} after multiplying $n_q$ yields
	\begin{align*}
	n_q\mathbb{E}_q \left[\ell(\hat{\bolddelta},\hat{\boldtheta})\right] - n_q\ell(\hat{\bolddelta},\hat{\boldtheta}) = - n_q\nabla_\bolddelta  \ell(\bolddelta^*,\boldtheta^*)^\top\left[\hat{\bolddelta}-\bolddelta^*\right] + o_p(1).
	\end{align*}
	Substitute $(\hat{\bolddelta}-\bolddelta^*)$ with \eqref{eq.opt.3} we get 
	\begin{align*}
	n_q\mathbb{E}_q \left[\ell(\hat{\bolddelta},\hat{\boldtheta})\right] - n_q\ell(\hat{\bolddelta},\hat{\boldtheta}) = n_q\nabla_\bolddelta  \ell(\bolddelta^*,\boldtheta^*)^\top\left[\bar{\boldH}_{11}^{-1}\nabla_\bolddelta \ell(\bolddelta^*,\boldtheta^*)
	+ \bar{\boldH}_{11}^{-1}\bar{\boldH}_{12} (\hat{\boldtheta} - \boldtheta^*)\right] + o_p(1).
	\end{align*}
	Substitute $(\hat{\boldtheta} - \boldtheta^*)$ using \eqref{eq.thetadiff.rep}, we get 
	\begin{align}
	n_q\mathbb{E}_q \left[\ell(\hat{\bolddelta},\hat{\boldtheta})\right] - n_q\ell(\hat{\bolddelta},\hat{\boldtheta}) = & n_q\nabla_\bolddelta  \ell(\bolddelta^*,\boldtheta^*)^\top\bar{\boldH}_{11}^{-1}\nabla_\bolddelta \ell(\bolddelta^*,\boldtheta^*)\notag\\
	-&n_q\nabla_\bolddelta \ell(\bolddelta^*,\boldtheta^*)^\top\bar{\boldH}_{11}^{-1}\bar{\boldH}_{12}	\left[\bar{\boldH}/\bar{\boldH}_{22}\right]^{-1} \bar{\boldH}_{21}\bar{\boldH}_{11}^{-1} \nabla_\bolddelta \ell(\bolddelta^*,\boldtheta^*)+ o_p(1)\label{eq.normality.1}
	\end{align}
	Replacing submatrices of $\bar{\boldH}_{a,b}$ using submatrices of $-\boldI_{a,b}$ in \eqref{eq.normality.1} and using the fact that $\boldI_{22}\equiv \boldzero$ (due to $\bolddelta^* = \boldzero$),
	\begin{align}
	n_q\mathbb{E}_q \left[\ell(\hat{\bolddelta},\hat{\boldtheta})\right] & - n_q\ell(\hat{\bolddelta},\hat{\boldtheta}) =  -\sqrt{n_q}\nabla_\bolddelta  \ell(\bolddelta^*,\boldtheta^*)^\top\boldI_{11}^{-1}\nabla_\bolddelta \ell(\bolddelta^*,\boldtheta^*)\sqrt{n_q} \notag\\
	& +\sqrt{n_q}\nabla_\bolddelta \ell(\bolddelta^*,\boldtheta^*)^\top\boldI_{11}^{-1}\boldI_{12}	\left[\boldI_{21}\boldI_{11}^{-1}\boldI_{12} \right]^{-1} \boldI_{21}\boldI_{11}^{-1} \nabla_\bolddelta \ell(\bolddelta^*,\boldtheta^*)\sqrt{n_q} + o_p(1)\label{eq.delldelta}
	\end{align}
	Taking the expectation, 
	\begin{align*}
	n_q\mathbbE\left\{\mathbb{E}_q \left[\ell(\hat{\bolddelta},\hat{\boldtheta})\right] - \ell(\hat{\bolddelta},\hat{\boldtheta}) \right\} =& -\mathrm{trace}(\boldI_{11}\boldI_{11}^{-1}) + \mathrm{trace}({\boldI_{11}^{-1}\boldI_{12}	\left[\boldI_{21}\boldI_{11}^{-1}\boldI_{12} \right]^{-1} \boldI_{21}})+ o_p(1) \\
	=& -\mathrm{rank}(\boldI_{11}) + \mathrm{rank}\left(\boldI_{21}\boldI_{11}^{-1}\boldI_{12}\right)+ o_p(1).
	\end{align*}
	In the case when $\boldI_{11} \in \mathbb{R}^{b \times b}, \boldI_{12}\in \mathbb{R}^{b \times \mathrm{dim}(\boldtheta)}$ are full-rank and $\mathrm{dim}(\boldtheta) \le b$, $\mathrm{rank}(\boldI_{11}) = b$ and $\mathrm{rank}\left(\boldI_{21}\boldI_{11}^{-1}\boldI_{12}\right) = \mathrm{dim}(\boldtheta)$, which completes the proof.
\end{proof}

\subsection{The Asymptotic Distribution of $2n_q \ell(\hat{\bolddelta},\hat{\boldtheta})$}
\label{sec.gof.proof}
We show $2n_q\ell(\hat{\bolddelta},\hat{\boldtheta})$ follows a $\chi^2$ distribution based on previously assumed assumptions.
\begin{thm}
\label{thm.asym.gof}
	Suppose Assumption \ref{eq.model}, \ref{assum.info.hessian},  \ref{ass.concentration} and \ref{assum.unifor.conv} holds,  $\mathbb{E}_q\left[\boldH^*_{\bolddelta,\bolddelta}\right]$ is invertible and $ \mathbb{E}_q\left[\boldH^*_{\bolddelta,\boldtheta}\right]$ are full-rank and $\mathrm{dim}(\boldtheta)< b$, then  $2n_q\ell(\hat{\bolddelta},\hat{\boldtheta}) \rightsquigarrow \chi^2(b - \mathrm{dim}(\boldtheta)).$
\end{thm}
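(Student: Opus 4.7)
}
The plan is to perform a quadratic expansion of $\ell(\hat{\bolddelta},\hat{\boldtheta})$ about the true parameters and then invoke Cochran's theorem on the resulting Gaussian quadratic form. First I would note two simplifying identities at the target point: since $\bolddelta^*=\boldzero$ we have $r_{\boldtheta^*}(\boldx;\boldzero)\equiv 1$, so $\ell(\bolddelta^*,\boldtheta^*)=0$ and moreover $\nabla_\boldtheta \ell(\boldzero,\boldtheta)\equiv 0$ for every $\boldtheta$, which also yields $\boldH^*_{\boldtheta,\boldtheta}=\boldzero$ (so $\boldI_{22}=\boldzero$, where $\boldI:=-\mathbb{E}_q[\boldH^*]$). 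A second-order Taylor expansion, combined with Assumption \ref{assum.unifor.conv} and Theorem \ref{thm.main}, therefore gives
\begin{equation*}
2n_q\ell(\hat{\bolddelta},\hat{\boldtheta}) = 2\boldg^\top(\sqrt{n_q}\hat{\bolddelta})\sqrt{n_q} \;-\; n_q(\hat{\boldeta}-\boldeta^*)^\top \boldI(\hat{\boldeta}-\boldeta^*) \;+\; o_p(1),
\end{equation*}
where $\boldg := \nabla_\bolddelta\ell(\bolddelta^*,\boldtheta^*)$.

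Next I would use the two first-order optimality relations established in the proofs of Theorems \ref{thm.main} and \ref{thm.normality2}, namely $\boldg = \boldI_{11}\hat{\bolddelta}+\boldI_{12}(\hat{\boldtheta}-\boldtheta^*)+o_p(n_q^{-1/2})$ and (using $\boldI_{22}=\boldzero$) $\boldI_{21}\hat{\bolddelta}=o_p(n_q^{-1/2})$. Substituting the first relation into the quadratic term and using $\boldI_{22}=\boldzero$ to eliminate the $(\hat{\boldtheta}-\boldtheta^*)^\top\boldI_{22}(\hat{\boldtheta}-\boldtheta^*)$ piece, the cross-term collapses and one is left with the clean expression $2n_q\ell(\hat{\bolddelta},\hat{\boldtheta}) = n_q\hat{\bolddelta}^\top \boldI_{11}\hat{\bolddelta}+o_p(1)$. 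Solving the optimality system for $\hat{\bolddelta}$ in terms of $\boldg$ yields $\sqrt{n_q}\hat{\bolddelta} = (\boldI_{11}^{-1}-\boldM)\sqrt{n_q}\boldg+o_p(1)$, where $\boldM := \boldI_{11}^{-1}\boldI_{12}(\boldI_{21}\boldI_{11}^{-1}\boldI_{12})^{-1}\boldI_{21}\boldI_{11}^{-1}$, and the key algebraic identity $\boldM\boldI_{11}\boldM=\boldM$ gives $(\boldI_{11}^{-1}-\boldM)\boldI_{11}(\boldI_{11}^{-1}-\boldM)=\boldI_{11}^{-1}-\boldM$. Hence
\begin{equation*}
2n_q\ell(\hat{\bolddelta},\hat{\boldtheta}) = (\sqrt{n_q}\boldg)^\top(\boldI_{11}^{-1}-\boldM)(\sqrt{n_q}\boldg)+o_p(1).
\end{equation*}

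Finally I would invoke the CLT: by Assumption \ref{ass.concentration} and Stein's identity the sample mean $\boldg=\frac{1}{n_q}\sum_i T_{\boldtheta^*}\boldf(\boldx^{(i)})$ is centered with population covariance $\mathbb{E}_q[T_{\boldtheta^*}\boldf\,T_{\boldtheta^*}\boldf^\top]=\boldI_{11}$ (this is the computation already used in Lemma \ref{cor.mono}), so $\sqrt{n_q}\boldg\rightsquigarrow \mathcal{N}(\boldzero,\boldI_{11})$. Writing $\boldv := \boldI_{11}^{-1/2}\sqrt{n_q}\boldg\rightsquigarrow\mathcal{N}(\boldzero,\mathrm{Iden}_b)$, the statistic becomes $\boldv^\top(\mathrm{Iden}_b-\boldL)\boldv$ with $\boldL := \boldA(\boldA^\top\boldA)^{-1}\boldA^\top$ and $\boldA := \boldI_{11}^{-1/2}\boldI_{12}$. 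Since $\boldL$ is the orthogonal projector onto the column space of $\boldA$, which has rank $\mathrm{dim}(\boldtheta)$ under the full-rank hypothesis on $\boldI_{12}$, the complementary projector $\mathrm{Iden}_b-\boldL$ has rank $b-\mathrm{dim}(\boldtheta)$, and Cochran's theorem delivers the claimed $\chi^2(b-\mathrm{dim}(\boldtheta))$ limit. The main obstacles I foresee are purely bookkeeping: first, verifying that the $o_p$ remainder in the Taylor expansion is genuinely $o_p(1)$ after multiplication by $n_q$ (this is where Assumption \ref{assum.unifor.conv} and the $\sqrt{n_q}$-consistency of $(\hat{\bolddelta},\hat{\boldtheta})$ from Theorem \ref{thm.normality2} are essential), and second, the careful cancellation of cross terms via the $\boldI_{22}=\boldzero$ identity so as to expose the clean projector structure $\boldM\boldI_{11}\boldM=\boldM$.
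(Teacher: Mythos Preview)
Your proposal is correct and follows essentially the same route as the paper: a second-order Taylor expansion of $\ell$ at $(\bolddelta^*,\boldtheta^*)$, substitution of the first-order optimality relations from Theorems~\ref{thm.main} and~\ref{thm.normality2}, and identification of the limiting quadratic form $\sqrt{n_q}\boldg^\top(\boldI_{11}^{-1}-\boldM)\sqrt{n_q}\boldg$, which is exactly the paper's $\sqrt{n_q}\nabla_\bolddelta\ell(\bolddelta^*,\boldtheta^*)^\top\boldI_{11}^{-1}A\,\nabla_\bolddelta\ell(\bolddelta^*,\boldtheta^*)\sqrt{n_q}$ with $A=\mathrm{Iden}-\boldI_{12}(\boldI_{21}\boldI_{11}^{-1}\boldI_{12})^{-1}\boldI_{21}\boldI_{11}^{-1}$. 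Your organization is somewhat tidier---you pass through the intermediate form $n_q\hat{\bolddelta}^\top\boldI_{11}\hat{\bolddelta}$ and then whiten to expose an explicit orthogonal projector for Cochran's theorem, whereas the paper evaluates each Taylor term separately and finishes with a pseudoinverse/rank argument---but the substance is the same.
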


\begin{proof}
	First we expand $2n_q\ell(\hat{\bolddelta},\hat{\boldtheta})$ using mean value theorem:
	\begin{align}
	2n_q\ell(\hat{\bolddelta},\hat{\boldtheta}) &=  2n_q \nabla_\bolddelta\ell(\bolddelta^*,\boldtheta^*)^\top d\bolddelta + n_q d\bolddelta \bar{\boldH}_{11} d\bolddelta + n_q d\bolddelta \bar{\boldH}_{12} d\boldtheta + n_q d\boldtheta \bar{\boldH}_{21} d\bolddelta + n_q d\boldtheta \bar{\boldH}_{22} d\boldtheta \label{eq.gof.1}
	\end{align}
	where $d\boldt$ is short for $\hat{\boldt} - \boldt^*$. Note $\ell(\bolddelta^*, \boldtheta^*) = 0$. Now we analyze each term.
	
	From the proof in Section \ref{sec.aic2.proof} we know
	\begin{align}
	2n_q \nabla_\bolddelta\ell(\bolddelta^*,\boldtheta^*)^\top d\bolddelta = & 2n_q\nabla_\bolddelta  \ell(\bolddelta^*,\boldtheta^*)^\top\boldI_{11}^{-1}\nabla_\bolddelta \ell(\bolddelta^*,\boldtheta^*) \notag\\
	-& 2n_q\nabla_\bolddelta \ell(\bolddelta^*,\boldtheta^*)^\top\boldI_{11}^{-1}\boldI_{12}	\left[\boldI_{21}\boldI_{11}^{-1}\boldI_{12} \right]^{-1} \boldI_{21}\boldI_{11}^{-1} \nabla_\bolddelta \ell(\bolddelta^*,\boldtheta^*) + o_p(1)\label{eq.gof.2}.
	\end{align}
	With the help of \eqref{eq.opt.3} and \eqref{eq.thetadiff.rep} and a few algebra we can see that 
	\begin{align}
	n_q d\bolddelta \boldI_{11} d\bolddelta &= n_q\nabla_\bolddelta  \ell(\bolddelta^*,\boldtheta^*)^\top\boldI_{11}^{-1}\nabla_\bolddelta \ell(\bolddelta^*,\boldtheta^*) \notag\\
	&-n_q\nabla_\bolddelta \ell(\bolddelta^*,\boldtheta^*)^\top\boldI_{11}^{-1}\boldI_{12}	\left[\boldI_{21}\boldI_{11}^{-1}\boldI_{12} \right]^{-1} \boldI_{21}\boldI_{11}^{-1} \nabla_\bolddelta \ell(\bolddelta^*,\boldtheta^*)+ o_p(1).\label{eq.gof.3}
	\end{align}
	Similar calculations also show $n_q d\bolddelta \bar{\boldH}_{12} d\boldtheta = n_q d\boldtheta \bar{\boldH}_{21} d\bolddelta = o_p(1)$ and $n_q d\boldtheta \bar{\boldH}_{22} d\boldtheta = o_p(1)$. Combine \eqref{eq.gof.1}, \eqref{eq.gof.2} and \eqref{eq.gof.3}, we can see that 
	\begin{align*}
	2n_q\ell(\hat{\bolddelta},\hat{\boldtheta}) =&  n_q\nabla_\bolddelta  \ell(\bolddelta^*,\boldtheta^*)^\top\boldI_{11}^{-1}\nabla_\bolddelta \ell(\bolddelta^*,\boldtheta^*)\\
	-& n_q\nabla_\bolddelta \ell(\bolddelta^*,\boldtheta^*)^\top\boldI_{11}^{-1}\boldI_{12}	\left[\boldI_{21}\boldI_{11}^{-1}\boldI_{12} \right]^{-1} \boldI_{21}\boldI_{11}^{-1} \nabla_\bolddelta \ell(\bolddelta^*,\boldtheta^*) + o_p(1)\\
	=& \sqrt{n_q}\nabla_\bolddelta \ell(\bolddelta^*,\boldtheta^*)^\top 
	\boldI_{11}^{-1}\left\{
	\mathrm{Iden} - \boldI_{12}	\left[\boldI_{21}\boldI_{11}^{-1}\boldI_{12} \right]^{-1} \boldI_{21}\boldI_{11}^{-1}
	\right\}
	\nabla_\bolddelta \ell(\bolddelta^*,\boldtheta^*)\sqrt{n_q}+ o_p(1),
	\end{align*}
	where $\mathrm{Iden}$ is identify matrix. 
	Denote $\mathrm{Iden} - \boldI_{12}	\left[\boldI_{21}\boldI_{11}^{-1}\boldI_{12} \right]^{-1} \boldI_{21}\boldI_{11}^{-1}$ as $A$. 
	One can verify that 
	$\nabla_\bolddelta \ell(\bolddelta^*,\boldtheta^*)^\top \boldI_{11}^{-1}A$ has covariance $\boldI_{11}^{-1}A$ \footnote{Some calculations show $A^\top \boldI_{11}^{-1}A = \boldI_{11}^{-1}A$.}. By checking the eigenvalues of $A$\footnote{$\mathrm{eig}(\mathrm{Iden} - T) = 1 - \mathrm{eig}(T)$ and $\mathrm{eig}(ST) = \mathrm{eig}(TS)$.}, it can be seen that 
	$
	\mathrm{rank}(A) = db - \mathrm{dim}(\boldtheta)
	$
	and assuming $\boldI_{11}^{-1}$ is full rank, $\mathrm{rank}(\boldI_{11}^{-1}A) = db - \mathrm{dim}(\boldtheta)$. 
	Therefore $\sqrt{n_q}\nabla_\bolddelta \ell(\bolddelta^*,\boldtheta^*)^\top \boldI_{11}^{-1}A$ is asymptotically a degenerated multivariate normal variable with covariance matrix $\boldI_{11}^{-1}A$. 
	
	We can rewrite $\sqrt{n_q}\nabla_\bolddelta \ell(\bolddelta^*,\boldtheta^*)^\top 
	\boldI_{11}^{-1}A
	\nabla_\bolddelta \ell(\bolddelta^*,\boldtheta^*)\sqrt{n_q}$ as \[\sqrt{n_q}\nabla_\bolddelta \ell(\bolddelta^*,\boldtheta^*)^\top 
	\boldI_{11}^{-1}A\left[\boldI_{11}^{-1}A\right]^{+}\boldI_{11}^{-1}A
	\nabla_\bolddelta \ell(\bolddelta^*,\boldtheta^*)\sqrt{n_q},\] where $T^{+}$ is the pseudoinverse. This quadratic form has a  $\chi^2$ distribution with degree of freedom $\mathrm{rank}(\boldI_{11}^{-1}A) = db - \mathrm{dim}(\boldtheta)$.
\end{proof}

\subsection{Proof of \Cref{prop.lag.dual}}
\label{sec.proof.dual}
\begin{proof}
	We convert the SDRE problem $\eqref{eq.steinDR}$ as the following equivalent problem: 
	\begin{align*}
	\max_{\bolddelta,\boldepsilon}   \sum_{i=1}^{n_q} \log \epsilon_i 
	\mathrm{ ~~s.t. : } \forall i\in \{1\dots n_q\}, ~\bolddelta^\top  T_{\boldtheta}\boldf(\boldx_q^{(i)}) + 1= \epsilon_i.
	\end{align*}
	Let us introduce Lagrangian multipliers $\mu_1 \dots \mu_{n_q}$ over all the constraints. 
	We can write the Lagrangian: 
	\begin{align}
	\label{eq.lag.1}
	\min_{\boldmu}\max_{\bolddelta,\boldepsilon} \sum_{i=1}^{n_q} \left(\log \epsilon_i \right) - \sum_{i=1}^{n_q} \mu_i \left(\bolddelta^\top  T_{\boldtheta}\boldf(\boldx_q^{(i)}) + 1 - \epsilon_i \right)
	\end{align}
	Solve the inner max problem with respect to $\boldepsilon$, 
	\begin{align}
	\label{eq.lag.2}
	\max_\boldepsilon \sum_{i=1}^{n_q} \log \epsilon_i + \mu_i \epsilon_i = \sum_{i=1}^{n_q} [- (\log -\mu_i) -1], 
	\end{align}
	when $\epsilon_i = -\frac{1}{\mu_i}$. 
	This also implies the relationship between the dual parameter $\mu_i$ and the primal parameter $\bolddelta$: $r_\boldtheta (\boldx_q^{(i)};\bolddelta) = \bolddelta^\top T_{p_{\boldtheta}}\boldf(\boldx_q^{(i)})+1 = \epsilon_i = -\frac{1}{\mu_i}$.
	
	The inner optimization with respect to $\bolddelta$, i.e., $\max_\bolddelta - \sum_{i=1}^{n_q} \mu_i \bolddelta^\top  T_{\boldtheta}\boldf(\boldx_q^{(i)})$ is a linear programming and is only bounded when $\sum_{i=1}^{n_q} \mu_i  T_{\boldtheta}\boldf(\boldx_q^{(i)})=\boldzero$ and achieves the optimal value 0. 
	
	Substituting the optimal values of these two maximization results into the Lagrangian and adding  constraint $\sum_{i=1}^{n_q} \mu_i  T_{\boldtheta}\boldf(\boldx_q^{(i)})=\boldzero$ gives the Lagrangian dual \eqref{eq.lag.0}. Moreover, the primal problem in \eqref{eq.steinDR} is concave, we can verify the Slater's condition holds at $\bolddelta = \boldzero, \boldepsilon = \boldone$ thus the strong duality holds. 
\end{proof}

\end{document}